\documentclass[10pt]{article}
\PassOptionsToPackage{table}{xcolor}
\usepackage[preprint]{tmlr}

\usepackage{amsmath,amsfonts,bm}

\def\eqref#1{equation~\ref{#1}}

\def\1{\bm{1}}

\DeclareMathAlphabet{\mathsfit}{\encodingdefault}{\sfdefault}{m}{sl}
\SetMathAlphabet{\mathsfit}{bold}{\encodingdefault}{\sfdefault}{bx}{n}

\DeclareMathOperator*{\argmax}{arg\,max}

\newcommand{\Brel}{B_{\mathrm{rel}}}
\newcommand{\Bval}{\mathcal{B}_{\mathrm{val}}}
\newcommand{\Valid}{\operatorname{Valid}}

\newcommand{\Agent}{\mathsf{A}}

\usepackage{amsmath,amssymb,mathtools}
\usepackage{booktabs,array,multirow,tabularx}
\usepackage{graphicx}
\usepackage{algorithm}
\usepackage{algpseudocode}
\usepackage{float}
\usepackage[table]{xcolor}
\usepackage{placeins,needspace}
\usepackage[skins]{tcolorbox}
\usepackage{tabularray}
\UseTblrLibrary{booktabs}
\usepackage{enumitem}
\usepackage{etoolbox}
\usepackage{microtype}
\usepackage{tikz}
\usetikzlibrary{arrows.meta,positioning,fit,calc,shapes.geometric,shapes.misc,backgrounds,decorations.pathreplacing}
\usepackage{hyperref}
\usepackage{url}
\usepackage{bibunits}
\newcommand{\suppcitep}{\citep}
\newcommand{\suppcitet}{\citet}
\newcommand{\mainref}[1]{\ref{#1}}
\usepackage{pifont}
\newcommand{\cmark}{\ding{51}}
\newcommand{\xmark}{\ding{55}}
\hypersetup{
    colorlinks=true,
    linkcolor=cgblue,
    citecolor=cgblue,
    urlcolor=cgblue,
    pdftitle={Bad Genius: Counterfactual-Guided Harness Evolution Beyond Task-Specific Shortcuts},
    pdfauthor={Guojun Zhu, Xunheng Huang, Peng Yin, Jiahui Xie, Sanguo Zhang, Doudou Zhou}
}

\newtheorem{theorem}{Theorem}

\newtheorem{lemma}{Lemma}
\newtheorem{condition}{Condition}
\AtBeginEnvironment{condition}{\footnotesize}

\newcommand{\methodraw}{RawHarness}
\newcommand{\methodmeta}{Meta-Harness}
\newcommand{\methodhc}{HarnessCompass}
\newcommand{\methodevolve}{HarnessEvolve}
\newcommand{\methodchase}{CHASE}

\newcolumntype{Y}{>{\raggedright\arraybackslash}X}
\newcolumntype{C}{>{\centering\arraybackslash}X}

\definecolor{cgblue}{HTML}{2B6CB0}
\definecolor{cgorange}{HTML}{C66A1B}
\definecolor{cggreen}{HTML}{2F855A}
\definecolor{cgred}{HTML}{B33A3A}
\definecolor{cgpurple}{HTML}{6750A4}
\definecolor{cggray}{HTML}{5F6B7A}
\definecolor{cglightblue}{HTML}{EAF2FB}
\definecolor{cglightorange}{HTML}{FDF1E6}
\definecolor{cglightgreen}{HTML}{EAF6EF}
\definecolor{cglightpurple}{HTML}{F0ECFA}
\definecolor{cglightgray}{HTML}{F3F5F7}
\newtcolorbox{harnesspanel}[2]{
  enhanced,colback=#2,colframe=#1!45!white,
  boxrule=0.4pt,arc=1pt,boxsep=0pt,
  left=6pt,right=6pt,top=5pt,bottom=5pt,
  before skip=0pt,after skip=0pt,
  fontupper=\small,fontlower=\small,
  before upper={\raggedright\setlength{\parindent}{0pt}},
  before lower={\raggedright\setlength{\parindent}{0pt}},
  segmentation style={solid,draw=#1!30!white,line width=0.35pt},
  middle=4pt
}

\title{Bad Genius: Counterfactual-Guided Harness Evolution\\Beyond Task-Specific Shortcuts}

\author{
{\name Guojun Zhu\textsuperscript{1,2}\quad
Xunheng Huang\textsuperscript{2}\quad
Peng Yin\textsuperscript{3}}\\[0.3em]
{\name Jiahui Xie\textsuperscript{2}\quad
Sanguo Zhang\textsuperscript{1}\quad
Doudou Zhou\textsuperscript{2,\ensuremath{\dagger}}}\\[0.65em]
{\addr\textsuperscript{1}School of Mathematical Sciences, University of Chinese Academy of Sciences, China}\\
{\addr\textsuperscript{2}Department of Statistics \& Data Science, National University of Singapore, Singapore}\\
{\addr\textsuperscript{3}Institute of Automation, Chinese Academy of Sciences, China}\\[0.35em]
{\addr\textsuperscript{\ensuremath{\dagger}}Corresponding author: \texttt{ddzhou@nus.edu.sg}}
}
\date{}
\begin{document}
\maketitle
\begin{bibunit}[tmlr]

\begin{abstract}
Reliable agent evaluation is complicated by automatic harness optimization, which repeatedly uses a released benchmark $\Brel$ to guide a Proposer that edits prompts, memory, retrieval, tools, and control code around a fixed foundation model. Task holdout is commonly used to guard against harness overfitting. It varies semantic tasks but leaves the benchmark protocol fixed, so a ``bad genius'' Proposer can produce a cheating harness whose improvement over the initial harness on $\Brel$ depends on a benchmark-wide shortcut. 
We introduce {\bf C}ounterfactual {\bf Ha}rness {\bf S}earch and {\bf E}volution ({\bf CHASE}), which casts harness evolution as constraint generation over valid counterfactual benchmarks. After each Proposer update, a {\it Challenger} searches for an executable protocol transformation with large gain destruction. A validity firewall checks that task semantics are preserved, while a held-out confirmation set determines whether the counterfactual enters a finite archive.
We formalize an ideal shortcut-neutralized benchmark $B_0$ and establish theoretical guarantees linking finite counterfactual archives to $B_0$ and characterizing sequential Challenger search.
We evaluate CHASE on Syn-Ledger and OfficeQA, where CHASE retains strong released-benchmark gains while substantially reducing gain destruction under valid protocol transformations.
\end{abstract}

\section{Introduction}
\label{sec:introduction}
Reliable agent evaluation is central to progress in agentic AI, with benchmarks playing a key role in determining which systems appear capable, safe, and ready for deployment. 
A reported score, however, does not reflect the foundation model alone, but also depends on the harness through which it operates and the benchmark protocol under which it is evaluated. 
A harness specifies the executable context around the model, including what information is stored and retrieved, how tools and workspace are exposed, and how outputs are handled. 
In the literature, the evaluated agent consists of the foundation model with its harness.
Evaluation is increasingly concerned not only with performance under a fixed harness, but also with the performance attainable after optimizing the harness around a fixed foundation model. 
Meta-Harness uses a coding agent (the Proposer) to revise the harness based on previous code, evaluation scores, and execution traces~\citep{lee2026metaharness}. Subsequent work studies held-out evaluation, optimizer quality, priority ranking, and reliable harness selection~\citep{wang2026rethinking,ong2026priority,ursekar2026harnessopt,zhao2026beyondprompts}. 
Harness-Bench further shows that performance and failure modes vary materially across model--harness configurations under shared tasks~\citep{yao2026harnessbench}. Together, these results make iterative harness optimization, which we
call harness evolution, a substantive evaluation target~\citep{zhu2026unified,zhu2026targets}.

Repeated use of benchmark feedback creates a distinct generalization problem. 
The ``Bad genius''\footnote{The title refers to the film \emph{Bad Genius}, where a gifted student helps others cheat through clever schemes.} Proposer is optimized for benchmark score, not for preserving what that score is intended to measure, and can modify the harness using benchmark feedback and execution traces. 
Classical shortcut learning concerns predictive rules induced by data or task artifacts~\citep{geirhos2020shortcut, ribeiro2020checklist}. 
Harness evolution adds an executable route: the Proposer can write the shortcut into the harness. 
Existing methods ban task identifiers, filenames, and per-task repair recipes, then evaluate on held-out tasks~\citep{wang2026rethinking, zhang2026harnesscompass}. 
These methods limit a \emph{task-specific shortcut}---for example, detecting one question ID and opening a known document---but not a {\bfseries\itshape benchmark-wide shortcut}, such as searching one document channel first because the released protocol places supporting evidence there more often.
As illustrated in Figure~\hyperref[fig:motivation]{1(a)}, task holdout therefore leaves benchmark-wide shortcuts intact: tasks change, but the protocol correlation does not.
We examine this problem in OfficeQA Full, a benchmark in which agents answer questions using U.S. Treasury Bulletin documents~\citep{opsahlong2026officeqa}. For example, 58.1\% of questions in OfficeQA Full mention numerical scales such as millions. In its 697-document corpus, the next nonblank line after 95.2\% of unit statements begins a table (Supplementary~\ref{app:officeqa-preprocess}).
The Proposer may propose a harness that induces the model to inspect the line before a table when looking for units.

This is distinct from overfitting to particular tasks: the shortcut can persist on held-out tasks.
The natural next step is therefore to retain held-out tasks while varying the protocol as well.
Figure~\hyperref[fig:motivation]{1(b)--(c)} illustrates this comparison. Under the released protocol in Figure~\hyperref[fig:motivation]{1(b)}, the shortcut ``search channel A first'' (e.g., inspect the text immediately preceding a table to identify its unit) still places the supporting evidence first. Figure~\hyperref[fig:motivation]{1(c)} then reassigns the same documents across channels, removing this shortcut advantage while keeping the question, answer, and documents fixed.
Changing the protocol may also affect the initial harness $H_0$. We therefore compare the evolved harness's gain over $H_0$ before and after the change. A large loss of gain reveals benchmark dependence that held-out tasks alone cannot expose.
This raises a question: {\it can we construct a benchmark that neutralizes all benchmark-wide shortcuts while preserving the underlying tasks?}

\begin{figure}[ht]
  \centering
  \includegraphics[width=0.9\textwidth]{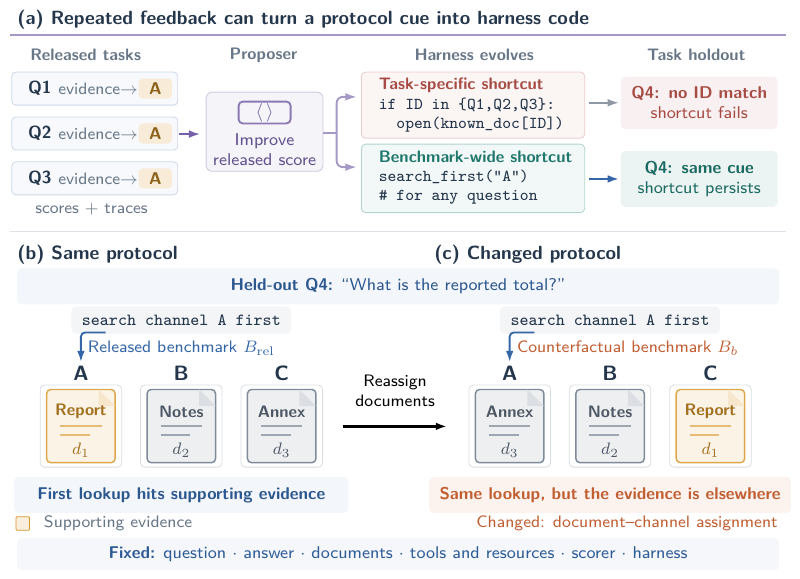}
  \caption{\textbf{Motivation: Why task holdout can miss benchmark-wide shortcuts.}}
  \label{fig:motivation}
\end{figure}

If every shortcut mechanism were known, one could construct an ideal shortcut-neutralized benchmark $B_0$ and optimize the harness on it. In realistic benchmarks, however, benchmark-wide shortcuts can arise from many parts of the protocol and remain hidden from both manual inspection and automated checks, making them difficult to pre-specify with fixed rules. 
We therefore introduce a coding agent named {\bfseries\itshape Challenger} that searches online, after each harness update, for executable protocol transformations, each producing a counterfactual benchmark designed to expose shortcuts in the updated harness.
Whereas prior work introduces auxiliary agents as {\it debuggers} or {\it critics} to guide harness repair~\citep{lin2026agentic,li2026evosafeharness}, our Challenger attacks shortcuts at their source---the benchmark protocol.
Each valid counterfactual confirmed on held-out tasks becomes a constraint on harness evolution, limiting how much future gains can depend on the same protocol correlation.

Our contributions are fourfold. 
First, we distinguish task-specific shortcuts from benchmark-wide shortcuts and define a gain-destruction estimand that assesses whether observed harness gains remain reliable under valid counterfactual benchmark.
Second, we propose \textbf{Counterfactual Harness Search and Evolution (CHASE)}, in which a Challenger searches for counterfactual benchmarks. 
Third, we establish theoretical guarantees that certify how much harness gain survives shortcut neutralization through counterfactual-guided evolution and when Challenger search can stop. 
Fourth, across synthetic benchmark Syn-Ledger and OfficeQA, we show that CHASE yields confirmed counterfactuals and better generalization to counterfactual benchmarks than other baselines.

\section{Related Work}
\label{sec:related-work}
\noindent{\bf Harness optimization.}
Earlier work automates large language model (LLM) system optimization through metric-driven pipeline compilation in {\it DSPy}~\citep{khattab2024dspy}, prompt search in {\it OPRO}~\citep{yang2024large}, and feedback propagation in {\it TextGrad}~\citep{yuksekgonul2025optimizing}.
{\it Meta-Harness} optimizes prompts, memory, tools, and control code using evaluation feedback~\citep{lee2026metaharness}.
{\it HarnessOpt-Bench} evaluates the optimizer under a fixed budget~\citep{ursekar2026harnessopt}, while {\it Priority ranking} tests whether optimizers can identify promising harness components~\citep{ong2026priority}.
{\it Harness-Bench} isolates harness effects across model backends~\citep{yao2026harnessbench}. 
{\it HarnessLens} reduces verification cost by selecting tasks for each candidate and confirming promising edits~\citep{xu2026harnesslens}. 
{\it AutoSaddler} further combines failure-trace diagnosis with validation-based harness updates~\citep{park2026autosaddler}.
These works characterize the optimization object, optimizer quality, and verification efficiency. 
CHASE addresses a different question: whether feedback from the released benchmark $\Brel$ selects a harness whose gain depends on protocol correlations shared across tasks.

\noindent{\bf Harness generalization.}
Held-out-task evaluation shows that optimizing and evaluating harnesses on the same tasks can substantially overstate their gains~\citep{wang2026rethinking, esakkiraja2026starharness}.
{\it HarnessCompass} uses a fixed generalization gate to allow only task-agnostic modifications, together with component-wise feedback to guide harness updates~\citep{zhang2026harnesscompass}.
{\it HarnessEvolve} uses reference trajectories, quality and performance gates, and held-out validation to filter and select harness updates~\citep{jiang2026harnessevolve}.
{\it EvoSafeHarness} uses a separate critic to check whether proposed safety rules still work when tool names or attack wording change~\citep{li2026evosafeharness}.
{\it Harness continual learning} evaluates new harness updates for retention of previously acquired behavior~\citep{kang2026harnesscontinual}.
More broadly, continually updated benchmarks, lifelong test sets, and overfitting alarms reduce repeated reliance on a fixed evaluation set~\citep{prabhu2024lifelong, ishida2026capbencher}.
Overall, existing methods mainly vary tasks, apply fixed filters to harness edits, or validate harness candidate updates before adoption. 
Task variation leaves the benchmark protocol unchanged, while the fixed filters can only cover anticipated shortcut patterns. Pre-adoption validation assesses candidates under selected test conditions. 
CHASE instead searches for counterfactual benchmarks against the updated harness while preserving task semantics, without relying on a fixed filter over harness edits.

\noindent{\bf Benchmark validity.}
Agent scores are properties of a model--harness--environment--protocol configuration rather than the foundation model alone~\citep{zhu2026unified, zhu2026targets, yao2026harnessbench}. 
Recent benchmark audits show that agent scores can be distorted by protocol flaws, search-time contamination, broken tasks, and scoring errors~\citep{shao2026protocol, wang2026searchtime, dong2026misscore}. 
Contamination tests and refreshed benchmarks address exposure of benchmark content~\citep{oren2023contamination, white2025livebench, wu2025antileak}.
{\it Auditing Harness Tampering} studies a related problem in self-improving agents, where harness edits can produce apparent performance gains without genuine capability improvement, and develops audits to detect and locate such edits~\citep{wang2026tampering}. 
{\it HackProbe} similarly detects reward hacking during self-evolution and uses the resulting signal for harness reselection~\citep{yang2026hackprobe}.
These works diagnose or reduce specific sources of invalid benchmark gains. CHASE instead tests whether harness gains survive valid protocol transformations, without pre-specifying the shortcut.

\noindent{\bf Counterfactual-guided optimization.}
Counterfactual-guided model repair repeatedly finds a counterfactual to the current model and updates the model to remove it~\citep{bauer2021specrepair, boetius2023counterexample}.
{\it Model-written evaluations} and {\it automated red teaming} use one model to generate tests that expose failures of another~\citep{perez2022redteam,perez2023modelwritten}.
{\it Metamorphic testing} evaluates a system under transformations that keep the task semantics unchanged~\citep{hyun2023metal,cho2025metamorphic}.
CHASE combines these ideas for harness evolution: after each Proposer update, the Challenger searches for a valid counterfactual benchmark that destroys the current harness's gain, and a confirmed counterfactual constrains subsequent harness updates.

\section{Counterfactual Harness Search and Evolution}
\label{sec:method}

\subsection{Problem Setup}
\label{sec:problem-setup}
Fix a foundation model $\Agent$, and recall from Section~\ref{sec:introduction} that $\Brel$ denotes the released benchmark used for harness optimization. To describe $\Brel$, let $U$ denote the underlying task instance and let $Y=\psi(U)$ be the response, where $\psi(\cdot)$ is the target rule. The released protocol $V_{\rm rel}$ contains file names, directory layout, metadata, tool aliases, demonstration order, feedback format, and other benchmark-specific details, while $Q_{\rm rel}$ specifies how the agent interacts with these components. 
Thus, $\Brel$ combines a distribution $P$ over semantic tasks $U$ and their responses $Y=\psi(U)$, the released protocol $(V_{\rm rel},Q_{\rm rel})$, the available tools and resource budgets, and a scorer. For simplicity, we write $\Brel=(P,V_{\rm rel},Q_{\rm rel},\psi)$.
For any benchmark variant $B$ sharing the semantic-task distribution $P$ and any harness $H$, let $\tau^{H,B}\sim p_{\Agent}(\cdot\mid H,B,U)$ denote the resulting execution trajectory and let $r_B(\tau^{H,B},U)\in[0,1]$ denote the score assigned by benchmark $B$.

Suppose harness optimization runs for $T$ rounds. Let $H_0$ be the initial harness. At each round $t=1,\ldots,T$, a Proposer---for example, GPT-5.6 Sol configured as a coding agent---observes the current harness, previous scores, and execution traces, edits $H_{t-1}$, and returns $H_t$. 
Let $D_s=\{U_i\}_{i=1}^{n_s}$ denote the search-task set. If $\tau_{i}^{H_t,B}$ is the rollout of $(\Agent,H_t)$ on $U_i$, define the empirical score $\widehat{R}_{B, D_s}(\Agent, H_t)$ and the population score $R_B(\Agent,H)$:
\begin{equation*}
    \widehat{R}_{B, D_s}(\Agent, H_t):=\frac{1}{n_s}\sum_{i=1}^{n_s}r_B(\tau_{i}^{H_t,B},U_i),\quad R_{B}(\Agent,H):=\mathbb{E}_{U\sim P}\mathbb{E}_{\tau^{H,B}\sim p_{\Agent}(\cdot\mid H,B,U)}[r_B(\tau^{H,B},U)].
\end{equation*}
The outer loop therefore selects $H_0,H_1,\ldots,H_T$ using the empirical scores. After breaking ties by a fixed rule, the search-optimal harness among the candidates is:
\begin{equation}
    H^\star_{\rm rel}\in\left\{H_t:\widehat{R}_{\Brel, D_s}(\Agent, H_t)=\max_{0\le j\le T}\widehat{R}_{\Brel, D_s}(\Agent, H_j)\right\}.
    \label{eq:selected-harness}
\end{equation}
The Proposer's direct objective is to increase $\widehat{R}_{\Brel, D_s}(\Agent, H_t)$ by modifying the harness, rather than to ensure that each task is solved through the intended capability. Because it can inspect prompts, memory, retrieval logic, code, tool calls, and execution traces, the resulting harness may achieve a higher released-benchmark score on \(D_s\) by exploiting a benchmark-wide shortcut.

If a shortcut mechanism is known, an ideal transformation may construct a neutralized benchmark $B_0:=\Phi_0(\Brel)$ through $\Phi_0(\Brel)=(P,V_0,Q_0,\psi)$, where $V_0$ and $Q_0$ are the shortcut-neutralized counterparts of $V_{\rm rel}$ and $Q_{\rm rel}$, respectively. A valid $\Phi_0$ removes the specified protocol correlation while leaving $U$, $Y=\psi(U)$, and the semantic-task distribution $P$ unchanged. Write $R_0:=R_{B_0}$ and $R_{\rm rel}:=R_{\Brel}$. Define $G_{\rm rel}(H;H_0)=R_{\rm rel}(\Agent,H)-R_{\rm rel}(\Agent,H_0)$ and $G_0(H;H_0)=R_0(\Agent,H)-R_0(\Agent,H_0)$. Then, we define $\Delta_{\rm BS}(H;H_0)$ with:
\begin{equation*}
    G_{\rm rel}(H;H_0)=G_0(H;H_0)+\Delta_{\rm BS}(H;H_0),
\end{equation*}
where $G_{\mathrm{rel}}\left(H;H_0\right)$ is the released gain during harness optimization, while $G_0\left(H;H_0\right)$ is the gain that survives shortcut neutralization. The remaining $\Delta_{\mathrm{BS}}\left(H;H_0\right)$ is the signed gain difference associated with the ``bad genius'' Proposer's changes in reliance on the benchmark-wide shortcut rather than improved task-solving capability.

Prior work commonly uses task holdout to guard against shortcuts, but it addresses only task-specific shortcuts~\citep{wang2026rethinking}. Let $D_h$ be a test-task set disjoint from $D_s$, and let $R_{B,D}(\Agent,H):=\mathbb{E}_{U\sim \widehat{P}_D}\mathbb{E}_{\tau^{H,B}\sim p_{\Agent}(\cdot\mid H,B,U)}[r_B(\tau,U)]$ denote expected average score on a finite task set $D$, where $P$ is replaced by the empirical distribution $\widehat{P}_D$. The excess search-set gain is:
\begin{equation*}
    \Delta_{\rm TS}(H;H_0)=\{R_{\Brel,D_s}(\Agent,H)-R_{\Brel,D_s}(\Agent,H_0)\}-\{R_{\Brel,D_h}(\Agent,H)-R_{\Brel,D_h}(\Agent,H_0)\}.
\end{equation*}
A task-specific shortcut can make $\Delta_{\rm TS}(H;H_0)>0$. A benchmark-wide shortcut can persist in both $D_s$ and $D_h$ because both use the same $(V_{\rm rel},Q_{\rm rel})$. Therefore $\Delta_{\rm TS}(H;H_0)\approx0$ does not imply $\Delta_{\rm BS}(H;H_0)\approx0$. Our goal is to achieve a substantial positive $G_{\rm rel}(H;H_0)$ while keeping $\Delta_{\rm BS}(H;H_0)$ below a small tolerance.

\subsection{From Counterfactuals to Harness Evolution}\label{sec:From Counterfactuals to Harness Evolution}

The ideal $B_0$ above defines the target estimand when all shortcut mechanisms are known. In practice, however, the complete set of shortcut mechanisms is unknown, so $B_0$ cannot be constructed. CHASE therefore replaces the single $B_0$ with a family of valid counterfactual benchmarks. Index each protocol transformation by $b$. Each $\Phi_b$ acts on benchmark configurations; its action on the released benchmark preserves the same semantic task and maps $B_b:=\Phi_b(\Brel)=(P,V_b,Q_b,\psi)$.
Write $\Valid(\Phi_b)=1$ when the transformation preserves the semantic task, target, evidence, and scoring semantics while changing only the permitted components of the protocol. The valid family is $\Bval=\{B_b=\Phi_b(\Brel):\Valid(\Phi_b)=1\}$, and the identity transformation includes $\Brel$ in $\Bval$.

\noindent{\bf Validity Firewall.} The benchmark-specific firewall sets $\Valid(\Phi_b)=1$ only when all executable checks comparing $\Brel$ with $\Phi_b(\Brel)$ pass; otherwise, $\Phi_b$ is rejected. These checks include preserving the semantic task and ground-truth answer and restricting changes to the permitted components. The complete set of checks is given in Supplementary~\ref{app:validity-firewall}.

For any candidate $H$ and $B_b\in\Bval$, let $R_b=R_{B_b}$ and define $G_b(H;H_0)=R_b(\Agent,H)-R_b(\Agent,H_0)$. Taking $B_b=\Brel$ or $B_b=B_0$ gives $G_{\rm rel}(H;H_0)$ or $G_0(H;H_0)$, respectively. The released gain destroyed by $B_b$ is $\Delta_b(H;H_0)=G_{\rm rel}(H;H_0)-G_b(H;H_0)$. When $B_b=B_0$, $\Delta_b(H;H_0)$ reduces to $\Delta_{\rm BS}(H;H_0)$, so the valid family $\Bval$ directly extends ideal neutralization $B_0$. Negative $\Delta_b$ means that the harness gain increases rather than decreases under the counterfactual. 

CHASE then alternates between a Proposer and a Challenger. Since $\Bval$ is an unknown infinite family, before round $t$, the Proposer has access to a finite archive $\mathcal A_{t-1}\subset\Bval$ containing $\Brel$ and the counterfactuals confirmed in earlier Challenger rounds. At the population level, the Proposer solves:
\begin{equation}
    H_t\in\argmax_H G_{\rm rel}(H;H_0)\quad\text{s.t.}\quad \Delta_b(H;H_0)\le\varepsilon,\quad \forall B_b\in\mathcal A_{t-1},
    \label{eq:proposer-update}
\end{equation}
where $\varepsilon\ge0$ controls how much of the released-benchmark gain may disappear under any archived counterfactuals. The Challenger targets the resulting $H_t$ by searching:
\begin{equation}
    B_{b_t}\in\argmax_{B_b\in\Bval}\Delta_b(H_t;H_0),
    \label{eq:challenger-update}
\end{equation}
and returns a typed, executable specification of $\Phi_{b_t}$. The resulting counterfactual $B_{b_t}$ is then considered for inclusion in $\mathcal A_t$, subject to the validity and confirmation checks below (Section~\ref{sec:finite-archive}).

Finally, the selection criterion in \eqref{eq:selected-harness} is inadequate for CHASE, since a high-scoring harness may violate counterfactual constraints discovered in later rounds. After updating the archive, we therefore select the final harness:
\begin{equation}
    H^\star\in\argmax_{H\in\{H_0,H_1,\ldots,H_T\}}G_{\rm rel}(H;H_0)
    \quad\text{s.t.}\quad \Delta_b(H;H_0)\le\varepsilon,\quad \forall B_b\in\mathcal A_T.
    \label{eq:final-selection}
\end{equation}

\begin{figure*}[!t]
  \centering
  \includegraphics[width=\textwidth]{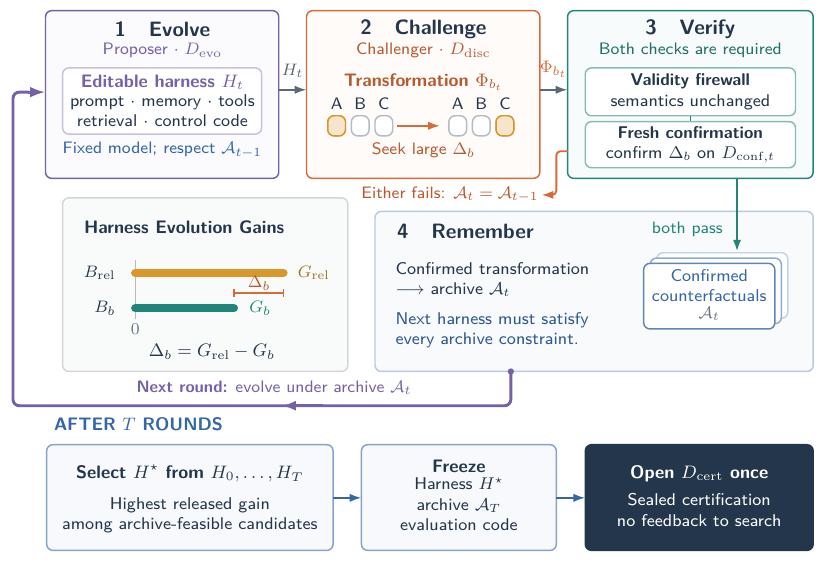}
  \caption{An overview of \textbf{CHASE}.}
  \label{fig:CHASE}
\end{figure*}

\subsection{Finite Archive and Confirmation}
\label{sec:finite-archive}
The objectives in equations~\ref{eq:proposer-update}--\ref{eq:final-selection} above are written at the population level; implementation and the analysis below use their empirical counterparts. For any finite task set $D$, define $\widehat R_{B,D}(\Agent,H):=\frac{1}{|D|}\sum_{U_i\in D}r_B(\tau_i^{H,B},U_i), \widehat G_{b,D}(H;H_0):=\widehat R_{B_b,D}(\Agent,H)-\widehat R_{B_b,D}(\Agent,H_0)$, and $\widehat\Delta_{b,D}(H;H_0):=\widehat G_{{\rm rel},D}(H;H_0)-\widehat G_{b,D}(H;H_0)$. When a task is evaluated with multiple rollouts, the rollout scores are first averaged within task. To separate search, confirmation, and final certification, the evaluation tasks are assigned disjoint roles:
\begin{equation*}
    D_{\rm evo},\ D_{\rm disc},\ D_{\rm conf,1},\ldots,D_{\rm conf,T},\ D_{\rm cert}\quad\text{are mutually disjoint}.
\end{equation*}
Here $D_{\rm evo}$ supplies the feedback used by the Proposer to search $H_t$ in \eqref{eq:proposer-update}, $D_{\rm disc}$ supplies the feedback used by the Challenger to search for $\Phi_{b_t}$ in \eqref{eq:challenger-update}, $D_{\rm{conf},t}$ is used to confirm $\Phi_{b_t}$ after it has been fixed, and $D_{\rm cert}$ remains sealed until final certification. 
This separation reserves fresh tasks for confirmation and certification, extending the search--test distinction $(D_s,D_h)$ in Section~\ref{sec:problem-setup}.

Since $\Bval$ is unknown and cannot be exhaustively searched, CHASE maintains the sequence of finite archives $\{\mathcal A_t\}_{t=0}^T$. Initialize $\mathcal A_0=\{\Brel\}$. Once the Challenger proposes $\Phi_{b_t}$, its executable code is fixed and $\Delta_{b_t}(H_t;H_0)$ is re-estimated on $D_{\rm{conf},t}$. For a confirmation threshold $\eta_{\rm{conf},t}>0$, let $\mathsf{Conf}_t:=\mathbf 1\!\left\{\widehat\Delta_{b_t,D_{\rm{conf},t}}(H_t;H_0)\ge\eta_{\rm{conf},t}\right\}$. The archive is then updated by:
\begin{equation}
    \mathcal A_t=
    \begin{cases}
        \mathcal A_{t-1}\cup\{B_{b_t}\}, & \Valid(\Phi_{b_t})=1\ \text{and}\ \mathsf{Conf}_t=1,\\
        \mathcal A_{t-1}, & \text{otherwise}.
    \end{cases}
    \label{eq:archive-update}
\end{equation}
In practice, each archived $B_{b_t}$ is stored together with the executable code for $\Phi_{b_t}$. Thus a Challenger proposal becomes a constraint on subsequent harness evolution only when it is valid and its effect is confirmed on $D_{\rm{conf},t}$.

To summarize performance over any finite counterfactual archive $\mathcal A\subset\Bval$, define:
\[
    \Gamma_{\mathcal A}(H;H_0):=\max_{B_b\in\mathcal A}\Delta_b(H;H_0),\qquad G_{\min,\mathcal A}(H;H_0):=\min_{B_b\in\mathcal A}G_b(H;H_0),
\]
with empirical counterparts on a task set $D$,
\[
    \widehat{\Gamma}_{\mathcal A,D}(H;H_0):=\max_{B_b\in\mathcal A}\widehat{\Delta}_{b,D}(H;H_0),\qquad \widehat{G}_{\min,\mathcal A,D}(H;H_0):=\min_{B_b\in\mathcal A}\widehat{G}_{b,D}(H;H_0).
\]
Here, $\Gamma_{\mathcal A}(H;H_0)$ is the largest gain destruction, $G_{\min,\mathcal A}(H;H_0)$ is the smallest surviving gain over $\mathcal A$, and we have $G_{\mathrm{min}, \mathcal{A}}(H;H_0)+\Gamma_{\mathcal{A}}(H;H_0)=G_{\mathrm{rel}}(H;H_0)$. Figure~\ref{fig:CHASE} summarizes the complete CHASE workflow.

\section{Theoretical Guarantees}
\label{sec:theory}
Theoretical analyses of harness evolution remain limited. 
Recent work studies how to determine from finite evaluation data whether a harness update improves performance without degrading prior behavior under a fixed task distribution~\citep{cai2026safeharness}.
It does not model protocol correlation or counterfactual discovery. We therefore ask what a finite counterfactual archive can certify about $B_0$ and, when it does not yet recover $B_0$, what can be concluded from sequential Challenger search.

For valid $\Phi_b,\Phi_{b'}$, define $B_{b'\circ b}:=\Phi_{b'}(\Phi_b(\Brel))$ by applying $\Phi_b$ first and $\Phi_{b'}$ second. This composition maps $(P,V_{\rm rel},Q_{\rm rel},\psi)$ to $(P,V_{b'\circ b},Q_{b'\circ b},\psi)$, and we assume $\Bval$ is closed under it. For nonempty finite $\mathcal A\subset\Bval$, define:
\[
    K(\mathcal A):=\min\left\{k\in\mathbb N^{+}:\,\exists\,B_{b_1},\ldots,B_{b_k}\in\mathcal A,B_{b_k\circ\cdots\circ b_1}=B_0\right\}.
\]
Thus $K(\mathcal A)=\infty$ if no composition of transformations in $\mathcal A$ produces $B_0$. Also define:
\[
    \rho:=\sup_{H}\sup_{B_b,B_{b'}\in\Bval}\left[\Delta_{b'\circ b}(H;H_0)-\Delta_b(H;H_0)-\Delta_{b'}(H;H_0)\right]_+.
\]
Thus $K(\mathcal A)$ is the smallest number of archived transformations whose composition produces $B_0$, while $\rho$ measures the worst-case excess gain destruction under composition. Equivalently, $\Delta_{b'\circ b}(H;H_0)\le \Delta_b(H;H_0)+\Delta_{b'}(H;H_0)+\rho$. Fix $\alpha\in(0,1)$. We obtain the following results.
\begin{theorem}
    \label{theorem:B_0}
    Let $\mathcal A\subset\Bval$ be nonempty and finite. If $\mathcal A$ and $H^\star$ are fixed before $D_{\rm cert}$ is opened and $K(\mathcal A)<\infty$, then, under the conditions in Supplementary~\ref{app:proofs}, with probability at least $1-\alpha/2$:
    \begin{equation*}
        \begin{aligned}
            \Delta_{\rm BS}(H^\star;H_0)
            &\le K(\mathcal A)\left\{\widehat{\Gamma}_{\mathcal A,D_{\rm cert}}(H^\star;H_0)+\sqrt{8\log(4|\mathcal A|/\alpha)/|D_{\rm cert}|}\right\}+(K(\mathcal A)-1)\rho,\\
            G_0(H^\star;H_0)
            &\ge \widehat{G}_{\min,\mathcal A,D_{\rm cert}}(H^\star;H_0)-\sqrt{2\log(4|\mathcal A|/\alpha)/|D_{\rm cert}|}\\
            &\qquad -(K(\mathcal A)-1)\left\{\widehat{\Gamma}_{\mathcal A,D_{\rm cert}}(H^\star;H_0)+\sqrt{8\log(4|\mathcal A|/\alpha)/|D_{\rm cert}|}+\rho\right\}.
        \end{aligned}
    \end{equation*}
\end{theorem}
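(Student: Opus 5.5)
The argument has two parts: a deterministic step that bounds $\Delta_{\rm BS}$ through the composition defect $\rho$, and a concentration step that controls the population quantities over the finite archive on $D_{\rm cert}$.

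\textbf{Deterministic step.} Let $K=K(\mathcal A)$ and fix archived transformations $B_{b_1},\ldots,B_{b_K}\in\mathcal A$ with $B_{b_K\circ\cdots\circ b_1}=B_0$. Closure of $\Bval$ under composition makes every partial composition $b_j\circ\cdots\circ b_1$ valid. We induct on $j$ using the defining inequality of $\rho$, taking the first factor to be the composite $b_{j-1}\circ\cdots\circ b_1$ and the second to be $b_j$:
\[
\Delta_{b_j\circ\cdots\circ b_1}(H;H_0)\le \Delta_{b_{j-1}\circ\cdots\circ b_1}(H;H_0)+\Delta_{b_j}(H;H_0)+\rho .
\]
This gives
\[
\Delta_{\rm BS}(H;H_0)=\Delta_{b_K\circ\cdots\circ b_1}(H;H_0)\le\sum_{j=1}^K\Delta_{b_j}(H;H_0)+(K-1)\rho\le K\,\Gamma_{\mathcal A}(H;H_0)+(K-1)\rho
\]
for every $H$, and in particular for $H=H^\star$. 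For the surviving gain, $G_0=G_{\rm rel}-\Delta_{\rm BS}$, and $G_{\rm rel}=G_{\min,\mathcal A}+\Gamma_{\mathcal A}$ holds by the identity stated in Section~\ref{sec:finite-archive}. Hence
\[
G_0(H^\star;H_0)\ge G_{\min,\mathcal A}(H^\star;H_0)+\Gamma_{\mathcal A}(H^\star;H_0)-K\Gamma_{\mathcal A}(H^\star;H_0)-(K-1)\rho=G_{\min,\mathcal A}(H^\star;H_0)-(K-1)\{\Gamma_{\mathcal A}(H^\star;H_0)+\rho\}.
\]

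\textbf{Statistical step.} Because $\mathcal A$ and $H^\star$ are fixed before $D_{\rm cert}$ is opened, the per-task quantities on $D_{\rm cert}$ are independent. This is where the supplementary conditions enter: i.i.d.\ tasks from $P$, rollouts independent across tasks, and the empirical average over $D_{\rm cert}$ unbiased for the population $R$, i.e.\ $D_{\rm cert}$ is a sample from $P$. For each $B_b\in\mathcal A$, the per-task gain $r_b(H^\star)-r_b(H_0)$ lies in $[-1,1]$. Hoeffding's inequality then gives a deviation of $\sqrt{2\log(2m/\delta)/n}$ for $\widehat G_{b}$. The per-task destruction $\Delta_b$ is a difference of two such gains and lies in $[-2,2]$, so its deviation is $\sqrt{8\log(2m/\delta)/n}$.

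Take $n=|D_{\rm cert}|$, $m=|\mathcal A|$, and spend $\alpha/4$ on the family of $G_b$ bounds and $\alpha/4$ on the family of $\Delta_b$ bounds. A union bound over $\mathcal A$ produces the factor $\log(4|\mathcal A|/\alpha)$, since a two-sided bound with per-element budget $\alpha/(4m)$ gives $\log(8m/\alpha)$, while one-sided bounds give $\log(4m/\alpha)$. The bounds actually needed are one-sided: $\Delta_b\le\widehat\Delta_b+c_8$ and $G_b\ge\widehat G_b-c_2$. Each costs $\alpha/(4m)$, so the total failure probability is at most $\alpha/2$.

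On this event, $\Gamma_{\mathcal A}=\max_b\Delta_b\le\widehat\Gamma_{\mathcal A,D_{\rm cert}}+c_8$, because the maximum of upper bounds bounds the maximum. Likewise $G_{\min,\mathcal A}\ge\widehat G_{\min,\mathcal A,D_{\rm cert}}-c_2$. Substituting these into the two deterministic inequalities yields both displayed bounds. The $\Gamma$ terms carry a positive coefficient, either $K$ or $K-1\ge0$, so replacing $\Gamma_{\mathcal A}$ by its upper bound preserves both directions.

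\textbf{Main obstacle.} The delicate point is the induction in the deterministic step, specifically the order of composition relative to how $\rho$ is defined. The definition takes the supremum over all pairs in $\Bval$, so applying it with the first factor equal to a partial composite is legitimate only because closure puts that composite in $\Bval$. A secondary subtlety is that the $G_{\min}$ and $\Gamma$ bounds must be joined coherently through the identity $G_{\min,\mathcal A}+\Gamma_{\mathcal A}=G_{\rm rel}$ at the population level. I apply that identity before passing to empirical quantities, which avoids needing a separate concentration bound for $\widehat G_{\rm rel}$.
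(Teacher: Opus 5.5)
Your proposal is correct and follows the paper's proof essentially step for step. The shared steps are the closure-based induction over partial composites $b_{j-1}\circ\cdots\circ b_1$ via the definition of $\rho$, the population identity $G_{\min,\mathcal A}+\Gamma_{\mathcal A}=G_{\rm rel}$ used before passing to empirical quantities, and one-sided Hoeffding bounds (summand ranges $[-1,1]$ and $[-2,2]$) with per-element budget $\alpha/(4|\mathcal A|)$ for each of the two families. The only difference is cosmetic: the paper makes your ``fixed before $D_{\rm cert}$ is opened'' step explicit by conditioning on $\mathcal F_{D_{\rm cert}}^-$. Under that conditioning, independence across tasks comes from the supplementary sampling condition, and measurability of $(H^\star,\mathcal A)$ gives the correct conditional means; the tower property then removes the conditioning.
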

Theorem~\ref{theorem:B_0} converts a finite archive into guarantees for $B_0$. The bounds tighten as $K(\mathcal A)$ and $\rho$ decrease. If $B_0\in\mathcal A$, then $K(\mathcal A)=1$ and every term containing $\rho$ disappears. 

For each search set $D\in\{D_{\rm evo},D_{\rm disc}\}$, let $\mathfrak R_D$ denote the Rademacher complexity term defined in Supplementary~\ref{app:uniform-evolution}, which typically decreases as $|D|$ grows. It controls uniform estimation of $R_{\rm rel}(\Agent,H)$ over harnesses $H$ and of $\Delta_b(H;H_0)$ over harness--counterfactual pairs $(H,B_b)$. We use $r=\max_{D\in\{D_{\rm evo},D_{\rm disc}\}}\left\{2\mathfrak R_D+\sqrt{8\log(8/\alpha)/|D|}\right\}$ which bounds the empirical-to-population deviation. For confirmation, set $\eta_{{\rm conf},t}=\varepsilon+\gamma+\sqrt{8\log(8T/\alpha)/|D_{{\rm conf},t}|}, 0<\gamma\le1$. Supplementary~\ref{app:proofs} discusses $\mathfrak R_D$ and $\gamma$.

\begin{theorem}
    \label{theorem:Conf_t=1}
    Suppose $\operatorname{Valid}(\Phi_{b_t})=1$ and $\widehat{\Delta}_{b,D_{\rm evo}}(H_t;H_0)\leq\varepsilon$ for every $B_b\in \mathcal A_{t-1}$ at each round. Assume that, for every $0<q\le1$, any $m$ harnesses with $\sup_{B_b\in\Bval}|\Delta_b(H^{(j)};H_0)-\Delta_b(H^{(k)};H_0)|\ge q$, for all $j\ne k$,
    obey $m\le(C/q)^d$, for constants $C\ge1$ and $d>0$. If $r<\gamma$, then, with probability at least $1-\alpha/2$,
    \[
        \#\{t\le T:\mathsf{Conf}_t=1\}\le\left(\frac{C}{\gamma-r}\right)^d.
    \]
    Consequently, if $T>\left(\frac{C}{\gamma-r}\right)^d$, then $\mathsf{Conf}_t=0$ for at least one $t\le T$.
\end{theorem}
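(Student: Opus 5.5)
The plan is a packing argument. I will show that the harnesses at confirmed rounds are pairwise $(\gamma-r)$-separated in the sup-over-$\Bval$ metric, and then apply the assumed covering bound with $q=\gamma-r$.

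\textbf{Step 1: a good event.} I define an event $E$ of probability at least $1-\alpha/2$, made up of three pieces.
\begin{itemize}
\item A uniform deviation bound on $D_{\rm evo}$. For all harnesses $H$ and all $B_b\in\Bval$, $|\widehat\Delta_{b,D_{\rm evo}}(H;H_0)-\Delta_b(H;H_0)|\le r/2$ (or $\le r$, depending on how the constant in $\mathfrak R_D$ is absorbed), with failure probability $\alpha/8$.
\item The analogous bound on $D_{\rm disc}$, with failure probability $\alpha/8$.
\item A per-round confirmation bound. For each $t\le T$, since $\Phi_{b_t}$ and $H_t$ are fixed before $D_{{\rm conf},t}$ is opened, Hoeffding applied to the $[-2,2]$-valued per-task differences gives $|\widehat\Delta_{b_t,D_{{\rm conf},t}}(H_t;H_0)-\Delta_{b_t}(H_t;H_0)|\le\sqrt{8\log(8T/\alpha)/|D_{{\rm conf},t}|}$. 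The failure probability is $\alpha/(4T)$ per round, so a union bound over $t$ gives $\alpha/4$.
\end{itemize}
The uniform bounds come from the standard symmetrization/McDiarmid argument in Supplementary~\ref{app:uniform-evolution}. That argument is what makes $r$ the deviation radius, and it is needed because $H_t$ depends on $D_{\rm evo}$.

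\textbf{Step 2: confirmed rounds force large population destruction.} On $E$, suppose $\mathsf{Conf}_t=1$. Then
\[
\Delta_{b_t}(H_t;H_0)\ge\eta_{{\rm conf},t}-\sqrt{8\log(8T/\alpha)/|D_{{\rm conf},t}|}=\varepsilon+\gamma .
\]
Validity puts $B_{b_t}$ in $\Bval$. By the archive update \eqref{eq:archive-update}, $B_{b_t}\in\mathcal A_{s-1}$ for every later round $s>t$.

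\textbf{Step 3: later harnesses respect archived constraints.} The hypothesis gives $\widehat\Delta_{b_t,D_{\rm evo}}(H_s;H_0)\le\varepsilon$ for $s>t$. On $E$, uniformity over $(H,B_b)$ then yields
\[
\Delta_{b_t}(H_s;H_0)\le\varepsilon+r .
\]

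\textbf{Step 4: separation.} Take confirmed rounds $t<s$. Then
\[
\sup_{B_b\in\Bval}|\Delta_b(H_t;H_0)-\Delta_b(H_s;H_0)|\ge\Delta_{b_t}(H_t;H_0)-\Delta_{b_t}(H_s;H_0)\ge\gamma-r>0 .
\]
The harnesses at confirmed rounds are therefore pairwise separated at scale $q=\gamma-r\in(0,1]$. They are distinct harnesses, since separation forces $H_t\neq H_s$.

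\textbf{Step 5: conclusion.} The packing hypothesis gives $\#\{t:\mathsf{Conf}_t=1\}\le (C/(\gamma-r))^d$. The "consequently" clause follows by pigeonhole: if $T$ exceeds this bound, not every round can be confirmed.

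\textbf{Main obstacle.} I expect Step 1 to be the hardest part, specifically ensuring the uniform deviation on $D_{\rm evo}$ holds for data-dependent $H_s$ at precisely radius $r$ rather than $2r$. If the stated $r$ only gives a $\pm r$ bound per empirical gain term, I would check that $\widehat\Delta$ is a single empirical mean of per-task differences bounded in $[-2,2]$. Its Rademacher/McDiarmid bound is then exactly $2\mathfrak R_D+\sqrt{8\log(8/\alpha)/|D|}$, matching the definition of $r$. The $D_{\rm disc}$ part of $r$ is not actually needed for this argument and only enters as a harmless upper bound through the max.
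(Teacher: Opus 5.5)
Your proposal is correct and follows essentially the same route as the paper's proof. On the common evolution event of Lemma~\ref{lem:uniform-evolution}, confirmation gives $\Delta_{b_t}(H_t;H_0)\ge\varepsilon+\gamma$, and later-round empirical feasibility plus the uniform $D_{\rm evo}$ bound gives $\Delta_{b_t}(H_s;H_0)\le\varepsilon+r$; the resulting $(\gamma-r)$-separation of confirmed-round harnesses is then fed into the packing condition with $q=\gamma-r\in(0,1]$. Your concern about radius $r$ versus $2r$ resolves as you anticipate: the paper's lemma applies symmetrization and bounded differences directly to the single $[-2,2]$-valued task-level terms $\delta_{b,i}(H)$, so $\widehat\Delta_{b,D}$ itself deviates by at most $r$.
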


Theorem~\ref{theorem:Conf_t=1} shows that, under the packing condition, only finitely many counterfactuals can be confirmed. In particular, for a sufficiently large $T$, the process must reach a round with $\mathsf{Conf}_t=0$. We next characterize what can be concluded at such a round.

\begin{theorem}
    \label{theorem:Conf_t=0}
    Suppose $H_t$ and $B_{b_t}$ solve the empirical counterparts of \eqref{eq:proposer-update} and \eqref{eq:challenger-update}. On the common evolution event defined in Supplementary Lemma~\ref{lem:uniform-evolution}, which has probability at least $1-\alpha/2$, every round $t$ for which $\Valid(\Phi_{b_t})=1$ and $\mathsf{Conf}_t=0$ satisfies:
    \[
        \begin{aligned}
            \sup_{B_b\in\Bval}\Delta_b(H_t;H_0)
            &<\varepsilon+\gamma+2r+2\sqrt{8\log(8T/\alpha)/|D_{{\rm conf},t}|},\\
            G_{\rm rel}(H_t;H_0)
            &\ge\sup_{\substack{\widetilde H:\ \Delta_b(\widetilde H;H_0)\le\varepsilon-r, \forall B_b\in\Bval}}G_{\rm rel}(\widetilde H;H_0)-2r.
        \end{aligned}
    \]
\end{theorem}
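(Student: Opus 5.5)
The argument works on the intersection of two events. The first is the common evolution event of Supplementary Lemma~\ref{lem:uniform-evolution}. On it, for $D\in\{D_{\rm evo},D_{\rm disc}\}$, we have $|\widehat G_{{\rm rel},D}(H;H_0)-G_{\rm rel}(H;H_0)|\le r$ uniformly over harnesses, and $|\widehat\Delta_{b,D}(H;H_0)-\Delta_b(H;H_0)|\le r$ uniformly over pairs $(H,B_b)$. The second is a confirmation event. Since $\Phi_{b_t}$ is fixed before $D_{{\rm conf},t}$ is opened, Hoeffding applied to the bounded per-task differences gives, for every $t\le T$,
\[
|\widehat\Delta_{b_t,D_{{\rm conf},t}}(H_t;H_0)-\Delta_{b_t}(H_t;H_0)|\le \sqrt{8\log(8T/\alpha)/|D_{{\rm conf},t}|}=:c_t .
\]
The per-task difference $\Delta$ lies in $[-2,2]$, which is why the constant is $8$. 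A union bound over $T$ rounds costs $\alpha/4$. I take the lemma's event to already include this confirmation part, so that the stated total is $1-\alpha/2$. If it does not, I would fold the two pieces into it with $\alpha/4$ each.

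For the first inequality, fix a round with $\Valid(\Phi_{b_t})=1$ and $\mathsf{Conf}_t=0$. Then $\widehat\Delta_{b_t,D_{{\rm conf},t}}(H_t;H_0)<\eta_{{\rm conf},t}=\varepsilon+\gamma+c_t$, so $\Delta_{b_t}(H_t;H_0)<\varepsilon+\gamma+2c_t$. Because $B_{b_t}$ is the empirical Challenger maximizer on $D_{\rm disc}$, every $B_b\in\Bval$ satisfies
\[
\Delta_b(H_t;H_0)\le \widehat\Delta_{b,D_{\rm disc}}(H_t;H_0)+r\le \widehat\Delta_{b_t,D_{\rm disc}}(H_t;H_0)+r\le \Delta_{b_t}(H_t;H_0)+2r .
\]
Taking the supremum over $B_b$ gives the first bound. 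The strict inequality is inherited from $\mathsf{Conf}_t=0$.

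For the second inequality, take any $\widetilde H$ with $\Delta_b(\widetilde H;H_0)\le\varepsilon-r$ for all $B_b\in\Bval$. By uniform concentration on $D_{\rm evo}$, $\widehat\Delta_{b,D_{\rm evo}}(\widetilde H;H_0)\le\varepsilon$ for every $B_b\in\Bval$. Since $\mathcal A_{t-1}\subset\Bval$, the harness $\widetilde H$ is feasible for the empirical Proposer problem at round $t$. Optimality of $H_t$ then gives $\widehat G_{{\rm rel},D_{\rm evo}}(H_t;H_0)\ge\widehat G_{{\rm rel},D_{\rm evo}}(\widetilde H;H_0)$. Transferring both sides back to the population with error $r$ each yields $G_{\rm rel}(H_t;H_0)\ge G_{\rm rel}(\widetilde H;H_0)-2r$. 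Taking the supremum over $\widetilde H$ completes the proof.

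The main obstacle is bookkeeping rather than any single estimate. The uniform deviation bound must hold simultaneously over data-dependent $H_t$ and over the whole infinite family $\Bval$, not only over $\mathcal A_{t-1}$. The Rademacher term $\mathfrak R_D$ in $r$ is designed to provide exactly this, so I would rely on Lemma~\ref{lem:uniform-evolution} as stated. Separately, I need the confirmation deviation to be the single-hypothesis Hoeffding bound. That is justified because $D_{{\rm conf},t}$ is disjoint from, and unseen by, the process that produced $H_t$ and $\Phi_{b_t}$, so conditioning on those choices leaves $D_{{\rm conf},t}$ i.i.d.
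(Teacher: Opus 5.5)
Your proposal is correct and follows the paper's proof step for step. The first bound comes from the confirmation deviation $x_t$, which is indeed already part of the lemma's event, combined with Challenger optimality on $D_{\rm disc}$. The second comes from the empirical feasibility of $\widetilde H$ on $D_{\rm evo}$ combined with Proposer optimality.

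One small correction: the lemma bounds the released \emph{score} $\widehat R_{\Brel,D_{\rm evo}}(\Agent,H)$ uniformly within $r$, not the gain $\widehat G_{{\rm rel},D}(H;H_0)$, which would only be within $2r$. So do the final transfer at the score level, as the paper does. There the common $H_0$ terms cancel, and the $-2r$ loss comes out exactly.
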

Under these conditions, Theorem~\ref{theorem:Conf_t=0} gives the complementary conclusion: a non-confirmed round certifies small gain destruction over the valid family $\Bval$, up to estimation error, while retaining near-optimal released gain among harnesses satisfying the constraints. Thus, confirmed rounds expand the archive, whereas a non-confirmed round suggests stopping with the current harness.

\section{Experiments}
\label{sec:experiments}
\raggedbottom

\subsection{Benchmarks and experimental setup}
\label{sec:officeqa-setup}

OfficeQA is a benchmark suite for question answering over U.S. government financial documents~\citep{opsahlong2026officeqa}. Its questions require document discovery, text and table retrieval, numerical reasoning, and exact answer extraction.
We use two OfficeQA releases: Full, which contains 246 questions, and Pro V2, which contains 90 questions over a separate receipts-and-expenditures corpus~\citep{databricks2026officeqa}. We use Full for harness optimization and Pro V2 for cross-corpus evaluation. OfficeQA is sensitive to harness because search, evidence handling, tool use, computation, and answer formatting are harness-controlled. At a fixed model, {\it EnvHarness} improved OfficeQA exact-match accuracy from $54.40\%$ to $56.20\%$ over its original-environment baseline~\citep{huang2026envharness}. We retain all released questions and answers and let the foundation model $\Agent$ search the full 697-document transformed-text corpus for OfficeQA Full. Supplementary~\ref{app:benchmark-details} gives release, corpus, and scoring details. 

Taking $T=3$, we divide the 246 questions into \(D_{\rm evo}\) (49 questions), \(D_{\rm disc}\) (49), \(D_{\rm cert}\) (76), and three round-specific confirmation task sets \(D_{{\rm conf},t}\), each containing 24 questions. Pro V2 is reported separately as a cross-corpus analysis. Our use of task splits follows prior OfficeQA evaluations \citep{alzubi2026evoskill, ursekar2026harnessopt}. We additionally report Pro V2 results, which were not reported in those studies. 

To isolate benchmark-wide shortcuts in this type of multi-document numerical reasoning, we also construct Syn-Ledger with 320 synthetic ledger tasks. Controlled shortcuts allow us to construct $B_0$ and directly measure how much harness gain survives their neutralization (Supplementary~\ref{app:synledger}).

\subsection{Methods and evaluation}
\label{sec:experimental-methods-evaluation}

On OfficeQA, we compare five methods. 
\textbf{\methodraw{}} keeps the initial harness $H_0$ unchanged and provides the reference for all gain and gain-destruction metrics. 
\textbf{\methodmeta{}} optimizes $H_0$ for released-benchmark performance~\citep{lee2026metaharness}.
\textbf{\methodhc{}} optimizes $H_0$ for released-benchmark performance under the fixed generalization gate of HarnessCompass~\citep{zhang2026harnesscompass}. 
\textbf{\methodevolve{}} uses reference trajectories and quality, performance, and held-out validation gates to select harness updates~\citep{jiang2026harnessevolve}.
\textbf{\methodchase{}} uses HarnessCompass's Proposer backbone without the fixed generalization gate. The Challenger supplies confirmed counterfactuals that enter $\mathcal A_t$ and constrain subsequent Proposer rounds and final harness selection. 
Syn-Ledger includes these five methods and adds \textbf{$B_0$-Access} with access to $B_0$. It uses HarnessCompass's three-round schedule and fixed generalization gate, but selects candidates by their $B_0$ performance.
Within each benchmark, all methods share $\Agent$, $H_0$. All optimized methods except HarnessEvolve (Supplementary~\ref{app:harnessevolve-adaptation}) use three Proposer rounds. Section~\ref{sec:theory} provides guidance on threshold calibration when budgets are sufficiently large. Under our limited budget, we fix $\eta_{{\rm conf},t}$ and $\varepsilon$ (Supplementary~\ref{app:implementation}). 

On OfficeQA, the primary evaluation applies each method's final harness $H$ to the 76 certification questions under every benchmark in the final CHASE archive $\mathcal A_3$. We report the released-benchmark score $\widehat R_{\Brel,D_{\rm cert}}(\Agent,H)$, the average and worst-case scores over $\mathcal A_3$. For a finite archive $\mathcal A$, define $\widehat R_{{\rm avg},\mathcal A,D_{\rm cert}}(\Agent,H):=|\mathcal A|^{-1}\sum_{B_b\in\mathcal A}\widehat R_{B_b,D_{\rm cert}}(\Agent,H)$ and $\widehat R_{\min,\mathcal A,D_{\rm cert}}(\Agent,H):=\min_{B_b\in\mathcal A}\widehat R_{B_b,D_{\rm cert}}(\Agent,H)$. For brevity, we write the three certification metrics as $\widehat R_{\rm rel}$, $\widehat R_{{\rm avg},\mathcal A_3}$ and $\widehat R_{\min,\mathcal A_3}$. Separately, $\widehat R_{\rm ProV2}$ denotes the released-benchmark score on the 90-question Pro V2 release. Details are given in Supplementary~\ref{app:benchmark-details}. We also compare token use for both pre-certification optimization and final certification, with further details in Supplementary~\ref{app:resource-use}.

On Syn-Ledger, we evaluate all six methods on the 208 certification tasks under $\Brel$ and $B_0$. We report $\widehat R_{\rm rel}$, $\widehat R_0$, $\widehat G_{\rm rel}$, $\widehat G_0$, and $\widehat\Delta_{\rm BS}$, with the certification-set and harness arguments suppressed. Supplementary~\ref{app:synledger} describes the task construction, shortcut generation, scoring, and task allocation.

\subsection{Results}
\label{sec:experimental-results}

Table~\ref{tab:main-results} reports OfficeQA results on Pro V2 and Full certification results. On $D_{\rm cert}$, \methodchase{} achieves a competitive released-benchmark score and the highest average and worst-case scores over the final archive $\mathcal A_3$. In contrast, \methodhc{} performs below the initial harness, suggesting that its fixed generalization gate does not ensure improved held-out performance on OfficeQA. 
OfficeQA is a new benchmark for \methodhc{}, which focuses on SWE-bench Verified~\citep{openai2024sweverified}; our implementation of its generalization gate is detailed in Supplementary~\ref{app:Generalization-Gate}. Notably, the advantage of \methodchase{} carries over to the Pro V2 corpus, where it scores 30.37\%.

\begin{table}[ht]
\centering
\caption{OfficeQA results. All evaluations use three rollouts per question.}
\label{tab:main-results}
\small
\setlength{\tabcolsep}{2.4pt}
\begin{tabular*}{\textwidth}{@{\extracolsep{\fill}}lcccc@{}}
\toprule
& \multicolumn{1}{c}{OfficeQA Pro V2}
& \multicolumn{3}{c}{Certification of OfficeQA Full} \\
\cmidrule(lr){2-2}\cmidrule(lr){3-5}
Method
& $\widehat R_{\rm ProV2}(\uparrow)$
& $\widehat R_{\rm rel}(\uparrow)$
& $\widehat R_{{\rm avg},\mathcal A_3}(\uparrow)$
& $\widehat R_{\min,\mathcal A_3}(\uparrow)$ \\
\midrule
\methodraw{} & 27.04\% & 67.98\% & 66.23\% & 64.47\%  \\
\methodmeta{} &  29.26\% & 63.60\% & 64.04\% & 63.60\%  \\
\methodhc{}  & 26.30\% & 64.04\% & 63.16\% & 62.28\% \\
\methodevolve{} & 24.07\% & {\bf 69.30\%} & 68.20\% & 67.11\%  \\
\methodchase{} & {\bf 30.37\%} & 68.86\% & {\bf 68.42\%} & {\bf 67.98\%} \\
\bottomrule
\end{tabular*}
\end{table}

The first-round Challenger proposes collecting a table's associated context before the table and supplies an executable transformation specification. It hypothesizes that the evolved harness $H_1$, instructed to ``keep an explicit unit for every operand,'' may rely on customary locations of units and notes. Figure~\ref{fig:officeqa-counterexample} illustrates a schematic visualization of the proposal; the host executes the generated specification by re-encoding retrieved text as table context in the agent-visible JSON search results (Supplementary~\ref{app:officeqa-additional-results}). 
On $D_{\rm evo}$, $H_1$'s gain over $H_0$ decreases from $8.16\%$ under $\Brel$ to $-5.10\%$ under $B_{b_1}$. This reversal shows that $H_1$'s released-benchmark gain depends on how retrieved evidence is represented. 
Once $B_{b_1}$ enters the archive, $H_1$ and all second-round Proposer candidates violate the $\varepsilon=0.05$ constraint, so CHASE sets $H_2=H_0$. In the third round, one candidate recovers a $4.08\%$ released gain while satisfying the constraint and is selected as $H_3$.

\begin{figure}[!htbp]
  \centering
  \includegraphics[width=\textwidth]{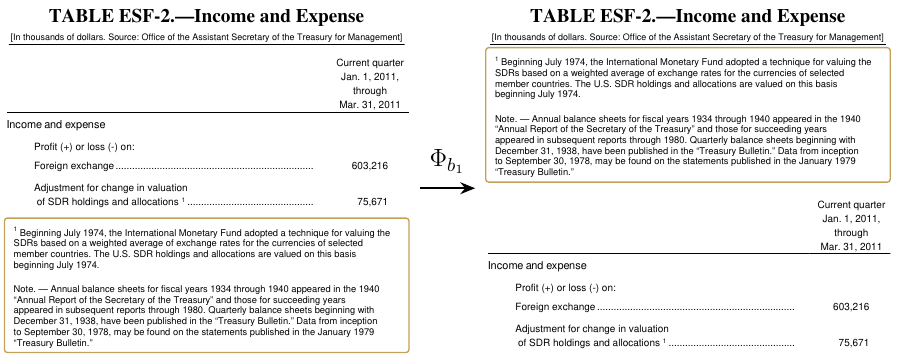}
  \caption{Illustration of the Challenger's proposal using a real OfficeQA excerpt.}
  \label{fig:officeqa-counterexample}
\end{figure}

We next turn to Syn-Ledger. Table~\ref{tab:synledger-results} shows that, although \methodchase{} trails \methodevolve{} on $\Brel$, it achieves the highest score and gain under $B_0$. Its gain is larger under $B_0$ than under the released benchmark, yielding a negative $\widehat\Delta_{\rm BS}$. This does not indicate that $B_0$ is easier: scores remain substantially lower under $B_0$, while the larger gain means that the selected harness of \methodchase{} improves more over $H_0$ under $B_0$ than under the released benchmark.

\begin{table}[ht]
\centering
\caption{Syn-Ledger results. All final evaluations use three rollouts per task.}
\label{tab:synledger-results}
\small
\setlength{\tabcolsep}{4pt}
\begin{tabular*}{\textwidth}{@{\extracolsep{\fill}}lccccc@{}}
\toprule
Method & $\widehat R_{\rm rel}\,(\uparrow)$ & $\widehat R_0\,(\uparrow)$ & $\widehat G_{\rm rel}\,(\uparrow)$ & $\widehat G_0\,(\uparrow)$ & $\widehat\Delta_{\rm BS}\,(\downarrow)$ \\
\midrule
\methodraw{} & 81.89\% & 15.22\% & -- & -- & -- \\
\methodmeta{} & 82.05\% & 19.87\% & +0.16\% & +4.65\% & -4.49\% \\
\methodhc{} & 80.45\% & 14.74\% & -1.44\%  & -0.48\% & -0.96\% \\
\methodevolve{} & {\bf 93.59\%} & 18.27\% & {\bf +11.70\%} & +3.04\% & +8.65\% \\ 
$B_0$-Access & 83.01\%  & 32.69\%  & +1.12\% & +17.47\% & -16.35\%\\
\methodchase{} & 87.18\% & {\bf 38.30\%} & +5.29\% & {\bf +23.08\%} & {\bf -17.79\%} \\
\bottomrule
\end{tabular*}
\end{table}

\FloatBarrier
\section{Discussion}

Harness optimization changes what must generalize. 
Unlike parameter optimization, where shortcuts typically arise from data or task artifacts~\citep{geirhos2020shortcut, jiang2026ladder}, harness evolution searches over executable programs. As a result, harness optimization can introduce benchmark-wide shortcuts.
CHASE makes this distinction explicit: task generalization asks whether a final harness works across tasks, whereas CHASE asks whether the harness-evolution gain persists under valid $B_b\in\Bval$. 
Our benchmarks use ground-truth-based scoring, while open-ended benchmarks often rely on imperfect LLM judges~\citep{feng2026noisy, lai2026biasscope}. CHASE could also be extended to LLM-as-a-Judge settings, offering a practical direction for testing judge validity.

\putbib[references]
\end{bibunit}

\clearpage
\section*{Supplementary Material}
\begin{bibunit}[tmlr]
\appendix

\renewcommand{\theHfigure}{supp.\thesection.\arabic{figure}}
\renewcommand{\theHtable}{supp.\thesection.\arabic{table}}
\renewcommand{\theHequation}{supp.\thesection.\arabic{equation}}

Section~\ref{app:proofs} contains the proofs, and Section~\ref{app:implementation} describes the method implementations. Sections~\ref{app:benchmark-details} and~\ref{app:synledger} give the OfficeQA experimental details and the Syn-Ledger benchmark construction and evaluation, respectively.

\renewcommand{\theequation}{A.\arabic{equation}}
\renewcommand{\thetable}{A.\arabic{table}}
\renewcommand{\thefigure}{A.\arabic{figure}}
\renewcommand{\thecondition}{S\arabic{condition}}
\setcounter{equation}{0} 
\setcounter{table}{0} 
\setcounter{figure}{0}

\section{Theory and Proofs}
\label{app:proofs}

Section~A.1 establishes the common evolution event. Sections~\ref{app:proof-theorem1}--\ref{app:proof-theorem3} prove Theorems~\mainref{theorem:B_0}--\mainref{theorem:Conf_t=0}, respectively, covering finite-archive certification, the number of confirmed rounds, and guarantees for a round with \(\mathsf{Conf}_t=0\).

\begin{condition}
\label{cond:sampling}
All task sets have positive sizes fixed before they are opened. For each $D\in\{D_{\rm evo},D_{\rm disc},D_{{\rm conf},1},\ldots,D_{{\rm conf},T},D_{\rm cert}\}$, let $\mathcal F_D^-$ denote the information available before $D$ is opened. The tasks $U_i\in D$ satisfy $U_i\stackrel{\rm iid}{\sim}P$. 
Conditional on $\mathcal F_D^-$, their task-level evaluation vectors are independent, with each vector containing all rollout scores for one task. 
For every harness--benchmark pair $(H,B)$ evaluated on \(D\),
\[
    0\le r_B(\tau_i^{H,B},U_i)\le1,\qquad \mathbb E\!\left[r_B(\tau_i^{H,B},U_i)\mid\mathcal F_D^-,U_i\right]=\mathbb E_{\tau\sim p_{\Agent}(\cdot\mid H,B,U_i)}[r_B(\tau,U_i)].
\]
For $D_{\rm evo}$ and $D_{\rm disc}$, the classes of harnesses and harness--counterfactual pairs over which the displayed suprema are taken are fixed before the corresponding task set is observed. Moreover, $(H_t,b_t)$ is $\mathcal F_{D_{{\rm conf},t}}^-$-measurable, and $(H^\star,\mathcal A)$ is $\mathcal F_{D_{\rm cert}}^-$-measurable. Dependence among evaluations of the same task is unrestricted.
\end{condition}
Condition~\ref{cond:sampling} is only a technical condition for the concentration analysis; it allows within-task dependence and evolution across rounds.

\subsection{Auxiliary Lemma and its Proof}
\label{app:uniform-evolution}

For $U_i\in D$ and $B_b\in\Bval$, write the task-level gain destruction as:
\[
    \delta_{b,i}(H):=r_{\Brel}(\tau_i^{H,\Brel},U_i)-r_{\Brel}(\tau_i^{H_0,\Brel},U_i)-r_{B_b}(\tau_i^{H,B_b},U_i)+r_{B_b}(\tau_i^{H_0,B_b},U_i)\in[-2,2].
\]
For $D\in\{D_{\rm evo},D_{\rm disc}\}$, let $\sigma_i$ be independent Rademacher signs and define:
\[
    \mathfrak R_D:=\mathbb E\max\left\{\sup_H\left|\frac1{|D|}\sum_{U_i\in D}\sigma_ir_{\Brel}(\tau_i^{H,\Brel},U_i)\right|,\;\sup_{H}\sup_{B_b\in\Bval}\left|\frac1{|D|}\sum_{U_i\in D}\sigma_i\delta_{b,i}(H)\right|\right\},
\]
where the expectation is over the task-level evaluations and the Rademacher signs. Here, \(\mathfrak R_D\) is the expected maximum of two worst-case Rademacher averages: one over released-benchmark scores as \(H\) varies and the other over gain destruction as both \(H\) and \(B_b\) vary. It measures the complexity cost of searching over harnesses and counterfactuals using \(D\). For fixed-complexity bounded classes, \(\mathfrak R_D\) is typically of order \(|D|^{-1/2}\). Hence, sufficiently large search sets allow a positive margin \(\gamma>r\), as required below.

\begin{lemma}
\label{lem:uniform-evolution}
Under Condition~\ref{cond:sampling}, define:
\begin{equation}
    r:=\max_{D\in\{D_{\rm evo},D_{\rm disc}\}}\left\{2\mathfrak R_D+\sqrt{\frac{8\log(8/\alpha)}{|D|}}\right\},\qquad x_t:=\sqrt{\frac{8\log(8T/\alpha)}{|D_{{\rm conf},t}|}}.
    \label{eq:search-radius}
\end{equation}
Then, with probability at least $1-\alpha/2$, simultaneously,
\begin{equation}
    \begin{aligned}
        \sup_H\left|\widehat R_{\Brel,D_{\rm evo}}(\Agent,H)-R_{\rm rel}(\Agent,H)\right|&\le r,\\
        \sup_H\sup_{\substack{B_b\in\Bval\\D\in\{D_{\rm evo},D_{\rm disc}\}}}\left|\widehat\Delta_{b,D}(H;H_0)-\Delta_b(H;H_0)\right|&\le r,\\
        \left|\widehat\Delta_{b_t,D_{{\rm conf},t}}(H_t;H_0)-\Delta_{b_t}(H_t;H_0)\right|&\le x_t
    \end{aligned}
    \label{eq:uniform-search}
\end{equation}
for every round \(t\) in which a proposal passes the validity firewall and is evaluated on \(D_{{\rm conf},t}\).
\end{lemma}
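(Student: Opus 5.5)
We allocate the failure probability $\alpha/2$ across three groups of events and combine them with a union bound. The budget is $\alpha/8$ for each of the two uniform statements on $D_{\rm evo}$, $\alpha/8$ for the uniform statement on $D_{\rm disc}$, and $\alpha/(8T)$ for each of the $T$ confirmation sets. This totals $\alpha/8+\alpha/8+\alpha/8+\alpha/8=\alpha/2$, with a slightly wasteful but convenient split. The logarithms $\log(8/\alpha)$ and $\log(8T/\alpha)$ in \eqref{eq:search-radius} are exactly what this allocation produces.

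\textbf{Uniform bounds on search sets.} Fix $D\in\{D_{\rm evo},D_{\rm disc}\}$ and condition on $\mathcal F_D^-$. By Condition~\ref{cond:sampling}, the function classes are fixed before $D$ is opened, and the task-level evaluation vectors are independent. For each task, the released score $r_{\Brel}(\tau_i^{H,\Brel},U_i)$ lies in $[0,1]$. Its conditional mean, after averaging over $U_i\sim P$ and the rollouts, is $R_{\rm rel}(\Agent,H)$. Likewise, $\delta_{b,i}(H)\in[-2,2]$ has mean $\Delta_b(H;H_0)$. Two steps follow.
\begin{itemize}
\item[(i)] Symmetrize: the expected supremum of the centred empirical process is at most $2\mathfrak R_D$. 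Because $\mathfrak R_D$ takes a maximum over both classes, it dominates each class separately.
\item[(ii)] Apply McDiarmid's bounded-differences inequality to the supremum, treating each task's whole evaluation vector as one coordinate. Within-task dependence is harmless because the coordinates are the tasks. Replacing one task changes $\sup_H\sup_b|\cdot|$ by at most $4/|D|$, since $\delta$ has range $4$. This gives a deviation of $\sqrt{8\log(1/\beta)/|D|}$ with probability at least $1-\beta$. For the released-score class the range is $1$, so the resulting deviation is smaller and is dominated by the same expression.
\end{itemize}
Setting $\beta=\alpha/8$ yields $2\mathfrak R_D+\sqrt{8\log(8/\alpha)/|D|}\le r$, which gives the first two lines of \eqref{eq:uniform-search}. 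This uses three events: the released class on $D_{\rm evo}$, and the $\delta$ class on each of the two search sets.

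\textbf{Confirmation sets.} For round $t$, the pair $(H_t,b_t)$ is $\mathcal F^-_{D_{{\rm conf},t}}$-measurable, so it is fixed once we condition. Then $\widehat\Delta_{b_t,D_{{\rm conf},t}}(H_t;H_0)$ is an average of independent task-level terms $\delta_{b_t,i}(H_t)\in[-2,2]$ with mean $\Delta_{b_t}(H_t;H_0)$. Hoeffding's two-sided inequality gives a deviation of at most $\sqrt{8\log(2/\beta)/|D_{{\rm conf},t}|}$. Taking $\beta=\alpha/(4T)$ and summing over $t\le T$ costs a total of $\alpha/4$ and yields exactly $x_t$. The conditional probability bounds hold for every realisation of $\mathcal F^-$, so integrating them preserves the unconditional bounds. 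Rounds that never reach confirmation simply impose no event. A union bound over all the events completes the proof.

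\textbf{Main obstacle.} The delicate step is the conditioning argument, not the arithmetic. We need the adaptively chosen $(H_t,b_t)$ to be treated as fixed relative to the fresh set $D_{{\rm conf},t}$. We also need the suprema over the search sets to be taken over classes fixed in advance, so that symmetrization is valid despite adaptivity; this is exactly what Condition~\ref{cond:sampling} supplies. The constant in the $D_{\rm evo}$ released-score bound is loose, and the allocation of $\alpha$ across events may need adjusting to match the stated logarithms exactly.
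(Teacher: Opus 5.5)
You follow the same route as the paper: symmetrization plus bounded differences on each search set, conditional Hoeffding on each confirmation set, and a union bound. The one problem is that your probability bookkeeping does not close with the stated constants.

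\textbf{Where the accounting fails.} Your opening paragraph gives each confirmation round $\alpha/(8T)$. Two-sided Hoeffding then yields a radius of $\sqrt{8\log(16T/\alpha)/|D_{{\rm conf},t}|}$, which is larger than $x_t$. In the confirmation paragraph you switch to $\beta=\alpha/(4T)$. That does produce $x_t$, but it costs $\alpha/4$ in total.

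Add your three separate search-set events at $\alpha/8$ each: the released class on $D_{\rm evo}$, the $\delta$ class on $D_{\rm evo}$, and the $\delta$ class on $D_{\rm disc}$. The total failure probability becomes $3\alpha/8+\alpha/4=5\alpha/8>\alpha/2$.

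The range-$1$ slack for the released-score class does not rescue this. At the same radius, McDiarmid with $c_i=1/|D|$ gives failure probability $(\alpha/8)^{16}$, and the total still exceeds $\alpha/2$. As written, you prove the lemma with probability $1-5\alpha/8$, or else with inflated radii.

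\textbf{The repair.} Do not split the two classes on a search set into separate events. Treat the released scores and the $\delta_{b,i}(H)$ as a single class with values in $[-2,2]$. Then apply McDiarmid once per search set $D$ to
\[
  \max\Bigl\{\sup_H\bigl|\widehat R_{\Brel,D}(\Agent,H)-R_{\rm rel}(\Agent,H)\bigr|,\ \sup_H\sup_{B_b\in\Bval}\bigl|\widehat\Delta_{b,D}(H;H_0)-\Delta_b(H;H_0)\bigr|\Bigr\}.
\]

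\begin{itemize}
\item Symmetrization bounds the expectation of this maximum by $2\mathfrak R_D$. This is exactly why $\mathfrak R_D$ is defined with the maximum inside the expectation.
\item The bounded difference is still $4/|D|$, so the deviation probability is at most $e^{-|D|z^2/8}$.
\item With $z=\sqrt{8\log(8/\alpha)/|D|}$, each search set costs a single event of probability $\alpha/8$, so both together cost $\alpha/4$.
\item The confirmation rounds cost $\alpha/4$, and the grand total is exactly $\alpha/2$ with the stated $r$ and $x_t$.
\end{itemize}

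This is precisely the paper's argument. As a by-product, it also controls the released-score deviation on $D_{\rm disc}$, though that is not needed.
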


\noindent{\bf Proof.} For either search set $D$ with $n=|D|$, standard symmetrization \suppcitep{supp-zhang2023mathematical} and the bounded-differences inequality give:
\[
    \begin{aligned}
        \Pr\Bigl\{\max\Bigl[&\sup_H\left|\widehat R_{\Brel,D}(\Agent,H)-R_{\rm rel}(\Agent,H)\right|,\\
        &\sup_H\sup_{B_b\in\Bval}\left|\widehat\Delta_{b,D}(H;H_0)-\Delta_b(H;H_0)\right|\Bigr]>2\mathfrak R_D+z\Bigr\}\le e^{-nz^2/8},
    \end{aligned}
\]
since all task-level quantities above lie in $[-2,2]$. Taking
$z=\sqrt{8\log(8/\alpha)/n}$ and a union bound over
$D_{\rm evo}$ and $D_{\rm disc}$ gives total failure probability at most $\alpha/4$.

For each round, conditional on $\mathcal F_{D_{{\rm conf},t}}^-$, $\widehat\Delta_{b_t,D_{{\rm conf},t}}(H_t;H_0)$ is an average of independent $[-2,2]$ variables with mean $\Delta_{b_t}(H_t;H_0)$. Hence, Hoeffding's inequality gives:
\[
    \Pr\!\left\{\left|\widehat\Delta_{b_t,D_{{\rm conf},t}}(H_t;H_0)-\Delta_{b_t}(H_t;H_0)\right|>x_t\,\middle|\,\mathcal F_{D_{{\rm conf},t}}^-\right\}\le2e^{-|D_{{\rm conf},t}|x_t^2/8}=\frac{\alpha}{4T}.
\]
Taking expectations and a union bound over $t=1,\ldots,T$ contributes at most another $\alpha/4$.

\subsection{Proof of Theorem~1}
\label{app:proof-theorem1}

Let $K=K(\mathcal A)<\infty$. By definition, there exist
$B_{b_1},\ldots,B_{b_K}\in\mathcal A$ such that
$B_{b_K\circ\cdots\circ b_1}=B_0$. Closure of $\Bval$ gives $B_{b_j\circ\cdots\circ b_1}\in\Bval$ for every $j=1,\ldots,K$.

For any $H$ and $B_b,B_{b'}\in\Bval$, the definition of $\rho$ gives:
\[
\begin{aligned}
    &\Delta_{b'\circ b}(H;H_0)-\Delta_b(H;H_0)-\Delta_{b'}(H;H_0)\\
    &\quad\le\left[\Delta_{b'\circ b}(H;H_0)-\Delta_b(H;H_0)-\Delta_{b'}(H;H_0)\right]_+\le\rho.
\end{aligned}
\]
Taking $b=b_{j-1}\circ\cdots\circ b_1$ and $b'=b_j$ therefore yields, for $j=2,\ldots,K$,
\[
    \Delta_{b_j\circ\cdots\circ b_1}(H;H_0)\le\Delta_{b_{j-1}\circ\cdots\circ b_1}(H;H_0)+\Delta_{b_j}(H;H_0)+\rho.
\]
The base case is:
\[
    \Delta_{b_1}(H;H_0)\le\Gamma_{\mathcal A}(H;H_0).
\]
If
\[
    \Delta_{b_{j-1}\circ\cdots\circ b_1}(H;H_0)\le(j-1)\Gamma_{\mathcal A}(H;H_0)+(j-2)\rho,
\]
then the preceding recurrence and $B_{b_j}\in\mathcal A$ give:
\[
\begin{aligned}
    \Delta_{b_j\circ\cdots\circ b_1}(H;H_0)&\le(j-1)\Gamma_{\mathcal A}(H;H_0)+(j-2)\rho+\Gamma_{\mathcal A}(H;H_0)+\rho\\
    &=j\Gamma_{\mathcal A}(H;H_0)+(j-1)\rho.
\end{aligned}
\]
Thus induction and $B_{b_K\circ\cdots\circ b_1}=B_0$ give:
\begin{equation}
    \Delta_{\rm BS}(H;H_0)\le K\Gamma_{\mathcal A}(H;H_0)+(K-1)\rho.
    \label{eq:supp-component-bound}
\end{equation}
Since $G_{\min,\mathcal A}=G_{\rm rel}-\Gamma_{\mathcal A}$ and $G_0=G_{\rm rel}-\Delta_{\rm BS}$, the preceding bound also gives:
\[
    G_0(H;H_0)\ge G_{\min,\mathcal A}(H;H_0)-(K-1)\{\Gamma_{\mathcal A}(H;H_0)+\rho\}.
\]

For certification, condition on $\mathcal F_{D_{\rm cert}}^-$. Then
$H^\star$ and $\mathcal A$ are fixed. Let $m=|\mathcal A|$ and
$n=|D_{\rm cert}|$. For each $B_b\in\mathcal A$, the task-level
summands defining $\widehat G_{b,D_{\rm cert}}(H^\star;H_0)$ are
\[
    r_{B_b}(\tau_i^{H^\star,B_b},U_i)-r_{B_b}(\tau_i^{H_0,B_b},U_i)\in[-1,1],
\]
while those defining
$\widehat\Delta_{b,D_{\rm cert}}(H^\star;H_0)$ are
\[
    r_{\Brel}(\tau_i^{H^\star,\Brel},U_i)-r_{\Brel}(\tau_i^{H_0,\Brel},U_i)-r_{B_b}(\tau_i^{H^\star,B_b},U_i)+r_{B_b}(\tau_i^{H_0,B_b},U_i)\in[-2,2].
\]
By the sampling conditions, these summands are conditionally independent across tasks and have conditional means $G_b(H^\star;H_0)$ and $\Delta_b(H^\star;H_0)$, respectively. Hence the one-sided Hoeffding inequalities give:
\[
    \begin{aligned}
        \Pr\!\left\{G_b(H^\star;H_0)<\widehat G_{b,D_{\rm cert}}(H^\star;H_0)-s_G\,\middle|\,\mathcal F_{D_{\rm cert}}^-
        \right\}
        &\le e^{-ns_G^2/2},\\
        \Pr\!\left\{\Delta_b(H^\star;H_0)>\widehat\Delta_{b,D_{\rm cert}}(H^\star;H_0)+s_\Delta\,\middle|\,\mathcal F_{D_{\rm cert}}^-\right\}
        &\le e^{-ns_\Delta^2/8},
    \end{aligned}
\]
where we set $s_G=\sqrt{\frac{2\log(4m/\alpha)}{n}},s_\Delta=\sqrt{\frac{8\log(4m/\alpha)}{n}}$. A union bound over all $B_b\in\mathcal A$ then gives, with conditional probability at least $1-\alpha/2$,
\[
    G_b(H^\star;H_0)\ge\widehat G_{b,D_{\rm cert}}(H^\star;H_0)-s_G,\qquad \Delta_b(H^\star;H_0)\le\widehat\Delta_{b,D_{\rm cert}}(H^\star;H_0)+s_\Delta,
\]
simultaneously for all $B_b\in\mathcal A$. Therefore,
\[
    G_{\min,\mathcal A}(H^\star;H_0)\ge\widehat G_{\min,\mathcal A,D_{\rm cert}}(H^\star;H_0)-s_G,\qquad\Gamma_{\mathcal A}(H^\star;H_0)\le\widehat\Gamma_{\mathcal A,D_{\rm cert}}(H^\star;H_0)+s_\Delta.
\]
Substituting these bounds into~\eqref{eq:supp-component-bound} and the corresponding lower bound for $G_0$ gives the two inequalities in Theorem~1. The same probability bound holds unconditionally by the tower property.

\subsection{Proof of Theorem~2}
\label{app:proof-theorem2}
Work on the common evolution event $\mathcal E$ of Lemma~\ref{lem:uniform-evolution}. At every confirmed round,
\begin{equation}
    \Delta_{b_t}(H_t;H_0)\ge\widehat\Delta_{b_t,D_{{\rm conf},t}}(H_t;H_0)-x_t\ge\varepsilon+\gamma.
    \label{eq:supp-confirmed-lower}
\end{equation}
Let $t_1<\cdots<t_m$ list the confirmed rounds. If $m=0$, the claim is immediate. For $j<k$, archive nesting gives $B_{b_{t_j}}\in\mathcal A_{t_k-1}$. Empirical feasibility at the later round and the uniform estimation bound in Lemma~\ref{lem:uniform-evolution} give:
\begin{equation}
    \Delta_{b_{t_j}}(H_{t_k};H_0)\le\widehat\Delta_{b_{t_j},D_{\rm evo}}(H_{t_k};H_0)+r\le\varepsilon+r.
    \label{eq:supp-later-feasibility}
\end{equation}
Combining this inequality with confirmation at the earlier round yields:
\[
    \sup_{B_b\in\Bval}|\Delta_b(H_{t_j};H_0)-\Delta_b(H_{t_k};H_0)|\ge\Delta_{b_{t_j}}(H_{t_j};H_0)-\Delta_{b_{t_j}}(H_{t_k};H_0)\ge\gamma-r.
\]
Because $0<\gamma-r\le1$, the packing condition in Theorem~\ref{theorem:Conf_t=1} applies and gives $m\le(C/(\gamma-r))^d$. If the round budget exceeds this bound and a valid proposal is tested every round, at least one round is not confirmed. 

\subsection{Proof of Theorem~3}
\label{app:proof-theorem3}

Work on the event $\mathcal E$ of Lemma~\ref{lem:uniform-evolution}. If $\mathsf{Conf}_t=0$, then:
\[
    \Delta_{b_t}(H_t;H_0)\le\widehat\Delta_{b_t,D_{{\rm conf},t}}(H_t;H_0)+x_t<\varepsilon+\gamma+2x_t.
\]
Because \(B_{b_t}\) solves the empirical Challenger problem, the \(D_{\rm disc}\) deviation bound in Equation~\eqref{eq:uniform-search} gives:
\[
    \begin{aligned}
        \sup_{B_b\in\Bval}\Delta_b(H_t;H_0)
        &\le\sup_{B_b\in\Bval}\widehat\Delta_{b,D_{\rm disc}}(H_t;H_0)+r\\
        &=\widehat\Delta_{b_t,D_{\rm disc}}(H_t;H_0)+r\\
        &\le\Delta_{b_t}(H_t;H_0)+2r
        <\varepsilon+\gamma+2x_t+2r.
    \end{aligned}
\]

Now take any $\widetilde H$ satisfying $\Delta_b(\widetilde H;H_0)\le\varepsilon-r$ for all $B_b\in\Bval$. Since $\mathcal A_{t-1}\subseteq\Bval$,
\[
    \widehat\Delta_{b,D_{\rm evo}}(\widetilde H;H_0)\le\Delta_b(\widetilde H;H_0)+r\le\varepsilon,\qquad B_b\in\mathcal A_{t-1},
\]
so $\widetilde H$ is feasible for the empirical Proposer. Hence,
\[
    \begin{aligned}
        R_{\rm rel}(\Agent,H_t)
        &\ge\widehat R_{\Brel,D_{\rm evo}}(\Agent,H_t)-r\\
        &\ge\widehat R_{\Brel,D_{\rm evo}}(\Agent,\widetilde H)-r\\
        &\ge R_{\rm rel}(\Agent,\widetilde H)-2r.
    \end{aligned}
\]
Subtracting $R_{\rm rel}(\Agent,H_0)$ and taking the supremum over $\widetilde H$ gives the second claim. If $\varepsilon\ge r$, the comparison set contains $H_0$.

\renewcommand{\theequation}{B.\arabic{equation}}
\renewcommand{\thetable}{B.\arabic{table}}
\renewcommand{\theHtable}{\Alph{section}.\arabic{table}}
\renewcommand{\thefigure}{B.\arabic{figure}}
\setcounter{equation}{0}
\setcounter{table}{0}
\setcounter{figure}{0}

\section{Method Implementation Details}
\label{app:implementation}

Section~\ref{app:comparison-methods} defines the comparison methods and shared settings. 
Section~\ref{app:CHASE-online} describes the Proposer and Challenger updates. 
Section~\ref{app:validity-firewall} specifies the protocol transformations and validity firewall. 
Section~\ref{app:agent-harnesses} summarizes the initial harness $H_0$, the fixed Proposer and Challenger harnesses and pseudocode.
Section~\ref{app:Generalization-Gate} details the HarnessCompass generalization gate used in OfficeQA and Syn-Ledger.
Section~\ref{app:harnessevolve-adaptation} describes HarnessEvolve and our budget-constrained adaptation.

\subsection{Comparison Methods and Shared Settings}
\label{app:comparison-methods}

Within OfficeQA or Syn-Ledger, all methods use the same foundation model and reasoning setting, initial harness $H_0$, request schema, sampling parameters, corpus, retrieval indexes, tools, context policy, step limit, time limit, retry policy, and scorer. 
For OfficeQA, all methods use \texttt{gpt-5.6-sol with high reasoning effort} for every model role, including the foundation model, Proposer, and Challenger.
The original HarnessCompass configuration uses \texttt{gpt-5.4 in the non-thinking setting} \suppcitep{supp-zhang2026harnesscompass}. We use a reasoning-enabled Proposer in OfficeQA so that any observed shortcut behavior cannot be simply attributed to insufficient reasoning.
For Syn-Ledger, we instead use \texttt{gpt-5.6-sol in the non-thinking setting} for all model calls to reduce experimental cost.
The foundation model $\Agent$ is not told which split each task belongs to. Each optimized method runs three Proposer rounds, except HarnessEvolve (Supplementary~\ref{app:harnessevolve-adaptation}).
Table~\ref{tab:method-implementations} summarizes the six methods. RawHarness keeps $H_0$ unchanged to measure gains from optimization. The fixed generalization gate applies to HarnessCompass and $B_0$-Access; CHASE uses archive constraints without the generalization gate. HarnessEvolve bounds regression across recent task batches, whereas CHASE bounds gain destruction across archived counterfactual benchmarks.

\begin{table}[H]
\centering
\caption{Method comparison in this study.
``Shortcut-aware'' indicates that harness evolution explicitly
accounts for shortcut behavior.
``Worst-case constraints'' bound the maximum regression across
recent task batches (HarnessEvolve) or the maximum gain destruction
across archived counterfactual benchmarks (CHASE) during optimization.
``$B_0$-free'' indicates no access to $B_0$ during optimization.}
\label{tab:method-implementations}
\small
\setlength{\tabcolsep}{5pt}
\renewcommand{\arraystretch}{1.10}
\begin{tabular}{lccccc}
\toprule
Method & Proposer & Challenger & Shortcut-aware
& Worst-case constraints & $B_0$-free \\
\midrule

RawHarness
& \xmark & \xmark & \xmark & \xmark & \cmark \\

Meta-Harness
& \cmark & \xmark & \xmark & \xmark & \cmark \\

HarnessCompass
& \cmark & \xmark & \cmark & \xmark & \cmark \\

HarnessEvolve
& \cmark & \xmark & \cmark & \cmark & \cmark \\

$B_0$-Access
& \cmark & \xmark & \cmark & \xmark & \xmark \\

\rowcolor{red!8}
CHASE
& \cmark & \cmark & \cmark & \cmark & \cmark \\
\bottomrule
\end{tabular}
\end{table}

For cost control, we use two rollouts per harness--question pair on $D_{\rm evo}$, two on $D_{\rm disc}$, five on each $D_{{\rm conf},t}$, and three on $D_{\rm cert}$. We set $(\eta_{{\rm conf},t},\varepsilon)=(0.05,0.05)$ on Syn-Ledger and $(0.075,0.05)$ on OfficeQA.

\subsection{Proposer and Challenger Updates}
\label{app:CHASE-online}

For controlled comparison, CHASE uses the same Proposer backbone as HarnessCompass, including its feedback procedure and separate structural and guidance tracks~\suppcitep{supp-zhang2026harnesscompass}. 
For HarnessCompass and $B_0$-Access, the generalization gate is specified once in the system prompt, fixed before optimization, and applied to every candidate edit. CHASE does not use the generalization gate.

At round $t$, HarnessCompass and CHASE each begin with their current harness $H_{t-1}$ and evaluate it on the full $D_{\rm evo}$. Following the history interface introduced by Meta-Harness~\suppcitep{supp-lee2026metaharness}, the Proposer can then inspect the method's complete optimization history through a controlled read-only file interface. This history contains the code of the current harness and all previously evaluated candidates, together with their scores and execution traces, including prompts, tool calls, model outputs, and state changes. The files retain their original names and relative paths so that the Proposer can navigate the history directly. HarnessCompass and CHASE have the same access to their own histories, but neither method can inspect the other method's rollouts. To reduce evaluation cost, we reuse previously computed estimates.

\noindent{\bf Proposer Update.} Following HarnessCompass, the Proposer analyzes this history through two complementary feedback passes~\suppcitep{supp-zhang2026harnesscompass}. Proactive feedback describes how the current harness affects the model's behavior and suggests possible improvements, while hindsight feedback uses the observed outcomes to identify behaviors associated with success or failure. The Proposer then develops one candidate for structural components and another for guidance components. Finally, it applies $R^3$---Revision, Recombination, and Refinement---to revise the two candidates and combine compatible changes into a third, integrated candidate~\suppcitep{supp-zhang2026harnesscompass}. 

HarnessCompass selects a new harness only when one of the three candidates achieves a higher released-benchmark score; otherwise, it retains the current harness. In CHASE, before candidate generation, the Proposer is also given the current archive $\mathcal A_{t-1}$ and the corresponding $D_{\rm evo}$ scores and trajectories. The archive constraints are included in the Proposer prompt during candidate generation and are also enforced during candidate selection. 
After evaluation, CHASE forms a selection pool containing the three new candidates, $H_{t-1}$, and $H_0$. It removes every harness that violates $\widehat{\Delta}_{b,D_{\rm evo}}(H;H_0)\leq\varepsilon$ for at least one $B_b\in\mathcal A_{t-1}$, then selects the remaining harness with the highest released-benchmark score on $D_{\rm evo}$ as $H_t$. Thus, an $H_{t-1}$ that violates an archive constraint is excluded from the score comparison, while $H_0$ always provides a feasible fallback.

\noindent{\bf Challenger Update.} After CHASE selects $H_t$, the Challenger searches for protocol transformations $\Phi_b$ under which the gain of $H_t$ over $H_0$ may disappear. In each round, it initially generates eight proposals, each containing a natural-language description and a typed transformation
specification. On OfficeQA, these specifications modify protocol-level features while preserving the underlying question and answer; see Supplementary~\ref{app:validity-firewall} for details.

Before any model rollout, the host applies the validity firewall and duplicate screen to all eight proposals. A proposal is discarded if it fails any validity check in Table~\ref{tab:validity-audit} or repeats a transformation already stored in the archive. To detect such repeats, the host represents each transformation as an ordered sequence of allowlisted API operators and their normalized arguments. A new proposal is treated as a duplicate when this representation matches that of an archived transformation. A hash of the representation is stored with each archived transformation and used for subsequent comparisons.

Among the proposals that pass both screens, CHASE selects two before any model rollout on \(D_{\rm disc}\), by a deterministic pre-evaluation ranking rule. The rule favors proposals that produce a larger observable change when applied to a fixed example, appear relevant to more discovery questions, and target more of the suspected shortcut mechanisms. When possible, the two selected proposals represent different kinds of protocol transformation. The retained proposals are then evaluated on \(D_{\rm disc}\), and CHASE select the proposal with the largest
\(\widehat{\Delta}_{b,D_{\rm disc}}(H_t;H_0)\).

Since the Challenger searches over $D_{\rm disc}$, a large value of $\widehat{\Delta}_{b, D_{\rm disc }}\left(H_t ; H_0\right)$ may be inflated by selection. For OfficeQA, the selected transformation is fixed before confirmation and evaluated on a 24-question set $D_{{\rm conf},t}$, with five rollouts per question. We evaluate both $H_t$ and $H_0$ under the released benchmark $\Brel$ and the proposed counterfactual benchmark $B_{b_t}$. For each harness--benchmark pair, the five rollout scores are first averaged within each of the 24 questions and then across questions. The counterfactual benchmark \(B_{b_t}\) enters the archive when the resulting task-level gain destruction is at least \(\eta_{{\rm conf},t}=0.075\). After three rounds, \methodchase{} selects its final harness from $H_0,\ldots,H_3$ using the constrained selection in Section~\ref{sec:method}. For fixed task and rollout budgets, the cost of evaluating each harness scales linearly with the number of archived benchmarks $|\mathcal A_{t-1}|$.

\subsection{Protocol Transformations and Validity Firewall}
\label{app:validity-firewall}

OfficeQA permits reversible changes to identifiers, document layout, tool-interface schemas, and enumeration order, including valid compositions. For Syn-Ledger, the Challenger proposes transformations within the permitted protocol components; the canonical neutralizations and placebo transformations in Section~\ref{app:synledger-protocols} are used only to validate the benchmark construction, not Challenger inputs. Across both benchmarks, transformations preserve the semantic task, target, available evidence, and scoring semantics~\suppcitep{supp-ribeiro2020checklist}.

Table~\ref{tab:validity-audit} lists the seven checks evaluated by the host-side validity firewall. Each check returns a Boolean value, and a proposal passes the validity firewall only when all seven values are true. The Challenger's natural-language validity claims are retained in the audit record but are not used as substitutes for these executable checks.
\begin{table}[ht]
\centering
\caption{Validity firewall: every check must pass.}
\label{tab:validity-audit}
\small
\setlength{\tabcolsep}{5pt}
\renewcommand{\arraystretch}{1.10}
\begin{tabularx}{\textwidth}{>{\raggedright\arraybackslash\bfseries}p{0.21\textwidth}Y}
\toprule
\rowcolor{cglightgray}
Invariant & \textbf{Required check} \\
\midrule
Task and answer
& The question, requested target, and ground-truth answer are unchanged. \\
\addlinespace[2pt]
Available information
& Document contents and membership match after undoing identifier mappings; no content is added, removed, altered, or moved between documents. \\
\addlinespace[2pt]
Access and resources
& The same documents and tool operations remain accessible, with unchanged tool-call, context, and step limits. \\
\addlinespace[2pt]
Scoring
& Answer normalization, numeric tolerance, and scorer verdicts on fixed correct and incorrect outputs are unchanged. \\
\addlinespace[2pt]
Declared changes
& Only the protocol metadata declared in the proposal are modified. \\
\addlinespace[2pt]
Task independence
& Rules do not depend on questions, answers, gold sources, or dataset splits. \\
\addlinespace[2pt]
Replay and inversion
& Identical inputs and settings reproduce the transformation; reversible changes recover the original state. \\
\bottomrule
\end{tabularx}
\end{table}

\begin{table}[ht]
\centering
\caption{Host-side pseudocode for the Syn-Ledger validity firewall.}
\label{tab:synledger-firewall-pseudocode}
\begin{harnesspanel}{cgblue}{cglightblue}
\textbf{Syn-Ledger validity firewall: host-side pseudocode.}

\begin{tabularx}{\linewidth}{@{}>{\bfseries\color{cgblue}}r@{\hspace{0.65em}}Y@{}}
1 & Parse the Challenger output as a typed transformation specification \(s\); reject any operator, argument, or modified field outside the allowlisted protocol API. \\[2pt]

2 & Canonicalize \(s\) into an ordered operator--argument representation. Reject \(s\) if the same canonical representation already occurs in the archive. \\[2pt]

3 & For every task \(U\) in the fixed validity-check set, execute \(Z_1\leftarrow\operatorname{Render}(U,s,\mathrm{seed})\) and \(Z_2\leftarrow\operatorname{Render}(U,s,\mathrm{seed})\), and parse \(Z_1\) back into typed document records. \\[2pt]

4 & \textit{Task and answer:} verify that the question and requested target are unchanged and that recomputing the answer from the parsed records returns the original ground-truth answer. \\[2pt]

5 & \textit{Available information:} verify equality of document membership and of the canonical record multisets before and after transformation. \\[2pt]

6 & \textit{Access, resources, and scoring:} verify equality of the accessible tool operations, resource limits, scorer hash, and scorer verdicts on fixed correct and incorrect output fixtures. \\[2pt]

7 & \textit{Declared changes and task independence:} compute the modified fields and require them to be a subset of the fields declared by \(s\); reject any reference to a question identifier, answer, gold source, or dataset split. \\[2pt]

8 & \textit{Replay and inversion:} require \(Z_1=Z_2\) and require the typed document records recovered from \(Z_1\) to equal those of \(U\). \\[2pt]

9 & Return \texttt{PASS} if and only if all seven checks in Table~\ref{tab:validity-audit} pass; otherwise return the failed check names and reject the proposal before \(D_{\rm disc}\) evaluation.
\end{tabularx}
\end{harnesspanel}
\end{table}

Table~\ref{tab:synledger-firewall-pseudocode} gives the Syn-Ledger implementation of the checks in Table~\ref{tab:validity-audit}. The canonical parser reconstructs the typed records from each rendered document, after which the benchmark generator independently recomputes the ground-truth answer. Identity, valid single-field transformations, and valid compositions serve as positive controls; transformations that modify the answer, introduce undeclared task-dependent fields, or use a non-allowlisted operator serve as negative controls. OfficeQA applies the same requirement that all seven checks pass with benchmark-specific JSON canonicalization and tool-output fixtures; we omit the analogous OfficeQA pseudocode.

\noindent\textbf{OfficeQA firewall example.}
In the first OfficeQA round, the Challenger first returned eight typed transformation proposals. The validity firewall and the duplicate screen accepted seven and rejected one before any model rollout on \(D_{\rm disc}\). 
The rejected proposal specified a seeded permutation of the search results. When executed on the validity firewall, however, it left the result order unchanged, so the duplicate screen excluded it as an inactive transformation and removed it from the candidate pool. The excluded proposal was inactive, rather than shown to violate semantic validity.

\subsection{Initial Harness, Proposer Harness, Challenger Harness, and Pseudocode}
\label{app:agent-harnesses}

Tables~\ref{tab:initial-harness}--\ref{tab:challenger-harness} summarize the three harness designs, including their accessible context, tools, control, outputs, and condensed core instructions. The harness evolves from $H_0$; the Proposer and Challenger use fixed harness configurations. The instructions below are illustrative templates; benchmark-specific tool schemas and run histories are supplied separately.

\begin{table}[H]
\centering
\caption{Initial harness $H_0$.}
\label{tab:initial-harness}
\begin{harnesspanel}{cgpurple}{cglightpurple}
\setlength{\tabcolsep}{4pt}
\renewcommand{\arraystretch}{1.06}
\begin{tabularx}{\linewidth}{@{}>{\raggedright\arraybackslash\bfseries\leavevmode\color{cgpurple}}p{0.18\linewidth}Y@{}}
Accessibility
& \textbf{OfficeQA:} the evaluated release's full transformed-text corpus.
\textbf{Syn-Ledger:} the current task's twelve documents. No gold annotations or generation metadata. \\
Memory
& No task-specific long-term memory. \\
Tools / retrieval
& \textbf{OfficeQA:} content-only passage BM25; conjunctive \texttt{all} matching on the unmodified query, with native BM25 ranking.
\textbf{Syn-Ledger:} task-local tools and result order (Section~\ref{app:synledger-construction}). \\
Control
& Agent-directed tool use within the shared budget; no harness-side query rewriting, fallback retrieval, or reranking. \\
\end{tabularx}
\tcblower
\textbf{Core prompt.} \emph{Use the available tools to answer the question and return the answer in the required format.}
\end{harnesspanel}
\end{table}

\begin{table}[H]
\centering
\caption{Proposer harness: generating harness candidates.}
\label{tab:proposer-harness}
\label{tab:proposer-prompt}
\begin{harnesspanel}{cgblue}{cglightblue}
\setlength{\tabcolsep}{4pt}
\renewcommand{\arraystretch}{1.06}
\begin{tabularx}{\linewidth}{@{}>{\raggedright\arraybackslash\bfseries\leavevmode\color{cgblue}}p{0.18\linewidth}Y@{}}
Accessibility
& Harness code, scores, and complete $D_{\rm evo}$ histories through read-only files; CHASE also receives $\mathcal A_{t-1}$ and its $D_{\rm evo}$ feedback. \\
Edit target
& Foundation model prompts, memory, retrieval, tool wrappers, and control code; the foundation model, corpus, available tool operations, scorer, and budget stay fixed. \\
Control
& Diagnose failures and regression risks; develop structural and guidance candidates, then integrate compatible changes through $R^3$ (Section~\ref{app:CHASE-online}). \\
Output
& Executable edits, targeted failures, activation and stopping conditions, expected tool cost, and regression risks. \\
Selection
& \textbf{HarnessCompass:} released-benchmark score with the generalization gate.
\textbf{$B_0$-Access:} $\widehat R_{B_0,D_{\rm evo}}$ with the generalization gate.
\textbf{CHASE:} released-benchmark score with $\widehat\Delta_{b,D_{\rm evo}}(H;H_0)\le\varepsilon$ for every $B_b\in\mathcal A_{t-1}$. \\
\end{tabularx}
\tcblower
\textbf{Core prompt.} \emph{Improve scores on $D_{\rm evo}$. Inspect every task--rollout result; distinguish stable successes, unstable outcomes, and stable failures. Separate observations from hypotheses. Use distinct mechanisms and concise conditional guidance; preserve behavior outside each edit's activation condition. Do not hard-code question identifiers, dataset membership, answers, or source filenames learned as labels.}
\end{harnesspanel}
\end{table}

\begin{table}[H]
\centering
\caption{Challenger harness: generating protocol transformations.}
\label{tab:challenger-harness}
\begin{harnesspanel}{cggreen}{cglightgreen}
\setlength{\tabcolsep}{4pt}
\renewcommand{\arraystretch}{1.06}
\begin{tabularx}{\linewidth}{@{}>{\raggedright\arraybackslash\bfseries\leavevmode\color{cggreen}}p{0.18\linewidth}Y@{}}
Accessibility
& $H_t$, $H_0$, the archive, and scores and execution traces for $H_t$ and $H_0$ on $D_{\rm disc}$ through read-only history files. \\
Edit target
& Benchmark protocol transformations from Section~\ref{app:validity-firewall}; both $H_t$ and $H_0$ remain unchanged. \\
Control
& Generate distinct proposals. The host rejects invalid or archived duplicates, ranks valid proposals on $D_{\rm disc}$, and fixes the selected transformation before evaluation on \(D_{{\rm conf},t}\). \\
Output
& Typed transformation specification, declared changes, suspected shortcut, supporting execution traces, and replay settings. \\
\end{tabularx}
\tcblower
\textbf{Core prompt.} \emph{Maximize} $\widehat\Delta_{b,D_{\rm disc}}(H_t;H_0)$: \emph{destroy gain over $H_0$, not merely raw score. Preserve questions, answers, document contents and membership, tool access, budgets, and scorer. Apply the same question-independent rule to both harnesses.}
\end{harnesspanel}
\end{table}

Algorithm~\ref{alg:chase} summarizes how the fixed Proposer and Challenger harnesses above interact with archive-constrained selection, the validity firewall, and confirmation on \(D_{{\rm conf},t}\).

\begin{algorithm}[H]
\caption{\methodchase{} evolution procedure}
\label{alg:chase}
\small
\begin{algorithmic}[1]
\Require initial harness \(H_0\), foundation model \(\Agent\), Proposer,
Challenger, released benchmark \(\Brel\), task sets \(D_{\rm evo}\), \(D_{\rm disc}\), and \(\{D_{{\rm conf},t}\}_{t=1}^{T}\), rounds \(T\), tolerance \(\varepsilon\), confirmation thresholds \(\{\eta_{{\rm conf},t}\}_{t=1}^{T}\)
\Ensure final harness \(H^\star\) and counterfactual archive \(\mathcal A_T\)
\State \(\mathcal A_0 \gets \{\Brel\}\)
\For{\(t=1,\ldots,T\)}
    \State Evaluate \(H_{t-1}\) under every \(B_b\in\mathcal A_{t-1}\) on \(D_{\rm evo}\), and update the Proposer history
    \State \((H_t^{\rm str},H_t^{\rm gui},H_t^{\rm int})\gets\Call{ProposerUpdate}{H_{t-1},\mathcal A_{t-1},\text{Proposer history}}\)
    \State \(\mathcal S_t\gets\{H_t^{\rm str},H_t^{\rm gui},H_t^{\rm int}, H_{t-1},H_0\}\)
    \State Evaluate every \(H\in\mathcal S_t\) under every \(B_b\in\mathcal A_{t-1}\) on \(D_{\rm evo}\)
    \State \(\mathcal F_t\gets\left\{H\in\mathcal S_t:\widehat{\Delta}_{b,D_{\rm evo}}(H;H_0)\leq\varepsilon\ \text{for all } B_b\in\mathcal A_{t-1}\right\}\)
    \State \(H_t\gets\argmax_{H\in\mathcal F_t}\widehat R_{\Brel,D_{\rm evo}}(\Agent,H)\)\Comment{constrained selection}
    \State Evaluate \(H_t\) and \(H_0\) under \(\Brel\) on \(D_{\rm disc}\), and update the Challenger history

    \If{\(H_t=H_0\)}
        \State \(\mathcal A_t\gets\mathcal A_{t-1}\)
        \State \textbf{continue to the next round}
    \EndIf
    \State \(\mathcal P_t\gets\Call{ChallengerUpdate}{H_t,H_0,\mathcal A_{t-1},\text{Challenger history}}\)\Comment{generate typed proposals}
    \State \(\mathcal P_t^{\rm sel}\gets\Call{ScreenAndSelect}{\mathcal P_t,\mathcal A_{t-1}}\)\Comment{retain at most two preselected proposals}
    \State \(\mathcal A_t\gets\mathcal A_{t-1}\)
    \If{\(\mathcal P_t^{\rm sel}\neq\varnothing\)}
        \State Evaluate \(H_t\) and \(H_0\) under each \(B_b\), \(b\in\mathcal P_t^{\rm sel}\), on \(D_{\rm disc}\)
        \State \(b_t\gets\argmax_{b\in\mathcal P_t^{\rm sel}}\widehat{\Delta}_{b,D_{\rm disc}}(H_t;H_0)\)
        \State Fix \(\Phi_{b_t}\) and set \(B_{b_t}\gets\Phi_{b_t}(\Brel)\)
        \State Evaluate \(H_t\) and \(H_0\) under \(\Brel\) and \(B_{b_t}\) on \(D_{{\rm conf},t}\)\Comment{confirmation on \(D_{{\rm conf},t}\)}
        \If{\(\Valid(\Phi_{b_t})=1\) and \(\widehat{\Delta}_{b_t,D_{{\rm conf},t}} (H_t;H_0)\geq\eta_{{\rm conf},t}\)}
            \State \(\mathcal A_t\gets\mathcal A_{t-1}\cup\{B_{b_t}\}\)\Comment{add the confirmed counterfactual}
        \EndIf
    \EndIf
\EndFor
\State Evaluate \(H_0,\ldots,H_T\) under every\(B_b\in\mathcal A_T\) on \(D_{\rm evo}\)
\State \(\mathcal F^\star\gets\left\{H_t:t=0,\ldots,T,\ \widehat{\Delta}_{b,D_{\rm evo}}(H_t;H_0)\leq\varepsilon\ \text{for all }B_b\in\mathcal A_T\right\}\)
\State \(H^\star\gets\argmax_{H\in\mathcal F^\star}\widehat R_{\Brel,D_{\rm evo}}(\Agent,H)\)
\State \Return \(H^\star,\mathcal A_T\)
\end{algorithmic}
\end{algorithm}

\subsection{Generalization Gate Contracts and Enforcement}
\label{app:Generalization-Gate}

Our HarnessCompass baseline retains the optimization components reported by \suppcitet{supp-zhang2026harnesscompass}: proactive and hindsight feedback, separate structural and guidance tracks, $R^3$ integration, a fixed generalization gate, and released-score-based updates. The shared OfficeQA settings for HarnessCompass and CHASE are specified in Section~\ref{app:comparison-methods}.

HarnessCompass places a fixed generalization gate between candidate generation and evaluation~\suppcitep{supp-zhang2026harnesscompass}. Its content requirement permits transferable decision rules with explicit applicability conditions and excludes rules tied to a task instance, test, private symbol, path, or task-specific token. Its placement requirement assigns executable capability changes to structural components and behavioral guidance to the system prompt or memory. We fixed a benchmark-specific realization of these requirements before optimization and applied it to every HarnessCompass and $B_0$-Access candidate in every round. The Proposer received the fixed contract in its controlled context: the full OfficeQA generalization-gate contract was available through the read-only interface, and the Syn-Ledger contract was supplied as \texttt{generalization\_gate.txt}. Table~\ref{tab:generalization-gate-contract} records the Proposer-facing contract, and Table~\ref{tab:generalization-gate-enforcement} summarizes the host-side checks applied before foundation model evaluation.

\begin{table}[!t]
    \caption{Fixed generalization gate contract supplied to the Proposer.}
    \label{tab:generalization-gate-contract}
    \begin{harnesspanel}{cgblue}{cglightblue}
    
    \textbf{Common contract.} Propose only rules that use information observable at inference time and can apply to unseen tasks. State the condition under which each rule applies. Do not encode a task identifier, split role, answer, gold source, private symbol or path, exact observed question, or data-derived lookup table. Put executable retrieval or control changes in structural fields; put behavioral advice in the system prompt, long-term memory, or guidance modules.
    
    \textbf{OfficeQA realization.} Apply the contract jointly to the system prompt, long-term memory, retrieval policy, tool policy, policy graph, and guidance modules. Do not use question identifiers, labels revealing a task's split assignment or whether it is evaluated under the released or a counterfactual benchmark, observed-question text, ground-truth answers or source names, question-specific date--number pairs, or branches on task membership.
    
    \textbf{Syn-Ledger realization.} Base candidate logic only on observable current inputs. Do not use task IDs, split roles, answers, gold sources, exact question lists, private filenames or paths, labels revealing which observable-feature levels are favorable, data-derived lookup tables, or branches on hidden roles, answers, sources, or feature assignments.
    \end{harnesspanel}
\end{table}

\begin{table}[!t]
    \caption{Host-side enforcement of the fixed generalization gate before model evaluation.}
    \label{tab:generalization-gate-enforcement}
    \begin{harnesspanel}{cggreen}{cglightgreen}
    
    \textbf{OfficeQA.} The host validates the declared schemas and track boundaries, then scans all six editable harness components. It rejects identifiers and role or gold labels; any shared eight-token span with an observed question; any gold source filename; and any pairing of a year from an observed question with a sufficiently long number from its answer. A structural candidate must change a declared retrieval or control-policy field without changing any guidance components; a guidance candidate must leave structural policy unchanged; and an integrated candidate must contain both types of change. Candidate harnesses that make no effective change or duplicate a previously evaluated harness are also rejected.
    
    \textbf{Syn-Ledger.} The host requires an integer-only answer schema, $1$--$12$ search results, $1$--$6$ opened files, and at most four search-query templates using only \texttt{entity}, \texttt{period}, \texttt{periods}, and \texttt{question} as placeholders. It enforces the structural and guidance track boundaries and requires an integrated candidate to combine both types of change from the structural and guidance candidates. It rejects overlap with exact task strings withheld from the Proposer; hidden role labels, answers, gold-source names, and terms used only by the hidden generator; experiment or task identity strings; and rules based on filename length, directory depth, search rank, or serialization format. A candidate proceeds only after its static audit returns \texttt{PASS}.
    \end{harnesspanel}
\end{table}

\subsection{Budget-Constrained HarnessEvolve Adaptation}
\label{app:harnessevolve-adaptation}

All optimized methods in our experiments, except HarnessEvolve, use three outer optimization rounds. HarnessEvolve is organized on two nested scales: within each epoch, it processes a sequence of batches, and each batch can propose a harness update, which is accepted only if it passes the quality and performance gates~\suppcitep{supp-jiang2026harnessevolve}. Held-out validation then selects a harness at the end of the epoch. A HarnessEvolve epoch can contain multiple potential harness updates and does not correspond to one round in our experiments. In our adaptation, round $t$ is one batch-level update opportunity from $H_{t-1}$ to $H_t$. At the start of each epoch, we independently repartition $D_{\rm evo}\cup \left(\bigcup_{t=1}^3 D_{{\rm conf},t}\right)$ into three disjoint batches, with one batch processed in each round. Each task therefore serves as a current-batch task exactly once per epoch and three times over the full three-epoch run, although it may be reevaluated when the performance gate revisits an earlier batch.

For each task, the foundation model $\Agent$ receives the ground-truth answer only to generate a candidate reference trajectory. The {\it trajectory verifier} admits the trajectory only if its actions are grounded in observations and tool outputs and do not directly restate the supplied answer. If no reference is admitted within the allowed number of attempts, the task uses the failed-trajectory fallback. In round $t$, $H_{t-1}$ is executed on the current batch without access to the ground-truth answer. For each failed execution, the Proposer compares the failed trajectory with its verified reference, when available, and locates their first action divergence. It then diagnoses the causes of the failures, groups failures with similar causes, and uses the resulting groups to propose an edit to $H_{t-1}$.

The candidate edit is evaluated by a quality gate and a performance gate, following~\suppcitep{supp-jiang2026harnessevolve}. The quality gate screens for task leakage and prompt bloat and may return the edit for revision. The performance gate requires the candidate to perform at least as well as $H_{t-1}$ on the current batch and limits its score decrease on each of up to $R$ previous-round batches to $\epsilon_{\rm HE}$. An accepted candidate defines $H_t$ and enters the cumulative candidate pool; if the candidate is rejected, $H_t=H_{t-1}$. Table~\ref{tab:harnessevolve-adaptation} lists the settings adjusted for this three-epoch (nine-round) implementation.

\begin{table}[!t]
\centering
\caption{HarnessEvolve settings adjusted for the nine-round outer optimization budget.}
\label{tab:harnessevolve-adaptation}
\small
\setlength{\tabcolsep}{5pt}
\renewcommand{\arraystretch}{1.15}
\begin{tabularx}{\textwidth}{@{}>{\raggedright\arraybackslash}p{0.20\textwidth}>{\raggedright\arraybackslash}p{0.31\textwidth}Y@{}}
\toprule
Budget item & HarnessEvolve~\suppcitep{supp-jiang2026harnessevolve} & Budget-constrained adaptation \\
\midrule
Optimization schedule
& 20 epochs with batch size 40. Each batch provides one gated update opportunity.
& 3 epochs, each with a three-batch partition of $D_{\rm evo}\cup \left(\bigcup_{t=1}^3 D_{{\rm conf},t}\right)$: $40/40/41$ tasks for OfficeQA and $27/27/26$ for Syn-Ledger.\\
Performance gate
& $R=2$ and $\epsilon_{\rm HE}=0.025$.
& $R=2$ and $\epsilon_{\rm HE}=0.05$. \\
\bottomrule
\end{tabularx}
\end{table}
After each of our three epochs, the score on $D_{\rm disc}$ selects one harness from the pool containing $H_0$ and all distinct accepted harnesses obtained so far. The selections after epochs 1 and 2 initialize the next epoch, whereas the selection after epoch 3 is the final harness used for certification. Within each benchmark, our budget-constrained adaptation of HarnessEvolve uses the same initial harness, foundation model, tools, scorer, and final certification procedure as the other methods.

\renewcommand{\theequation}{C.\arabic{equation}}
\renewcommand{\thetable}{C.\arabic{table}}
\renewcommand{\thefigure}{C.\arabic{figure}}
\setcounter{equation}{0}
\setcounter{table}{0}
\setcounter{figure}{0}

\section{Additional Results of OfficeQA}
\label{app:benchmark-details}

Section~\ref{app:officeqa-preprocess} describes corpus preprocessing and data allocation, and Section~\ref{app:analysis} specifies evaluation and reporting.
Section~\ref{app:officeqa-additional-results} presents the OfficeQA counterfactual and its implementation. Section~\ref{app:resource-use} reports token use during pre-certification optimization and certification.

\subsection{Preprocessing and Task Allocation}
\label{app:officeqa-preprocess}

OfficeQA Full contains 246 questions\footnote{OfficeQA Full:
\url{https://huggingface.co/datasets/databricks/officeqa}}, which use the same corpus of 697 documents from the U.S. Treasury Bulletin collection spanning 1939--2025~\suppcitep{supp-opsahlong2026officeqa,supp-databricks2026officeqa}. Each question record includes an identifier, question, answer, source URLs, corresponding source files, and difficulty label. The corpus is released as original PDFs, parsed JSON, and transformed text with tables represented in Markdown. We use the transformed-text corpus throughout; the original PDFs and parsed JSON are not used. Question identifiers and texts are unique within the release.

We examine the question texts and corpus layout to quantify the formatting regularity described in Section~\mainref{sec:introduction} (Table~\ref{tab:officeqa-raw-layout}). A question mentions a numerical scale if it contains the whole word \emph{thousand}, \emph{million}, \emph{billion}, or \emph{trillion}, allowing plural forms and ignoring case. In the corpus, we count tables with at least two columns, a header, a delimiter row, and a data row. This yields 94,303 tables, including contents tables, repeated tables across bulletin editions, and separately rendered parts of longer tables. We identify unit statements by matching non-table lines containing explicit numerical-scale or unit expressions involving hundreds, thousands, millions, billions, trillions, dollar(s), cent(s), percent/percentage(s), basis points, ounces, or units. Matching ignores case and allows leading heading markers, an opening bracket, and an Amounts, Figures, or Dollar Amounts prefix, optionally preceded by All. This rule identifies 66,951 unit statements. A Note/Source line begins with either label and a colon, period, or dash, allowing plurals and leading heading markers. Adjacency ignores blank lines only.

\begin{table}[htbp]
\centering
\small
\caption{Descriptive layout regularities in OfficeQA Full and its 697-document text corpus.}
\label{tab:officeqa-raw-layout}
\begin{tabular}{@{}lrr@{}}
\toprule
Criterion & Count / denominator & Percentage \\
\midrule
Questions mentioning numerical scales
& 143 / 246 & 58.1\% \\
Identified unit statements immediately before a table
& 63,747 / 66,951 & 95.2\% \\
Table blocks preceded by an identified unit statement
& 63,747 / 94,303 & 67.6\% \\
Table blocks followed by a Note/Source line
& 25,253 / 94,303 & 26.8\% \\
\bottomrule
\end{tabular}
\end{table}

All 697 documents contain examples of units before tables and Note/Source lines after tables. These counts describe marked text and its position; notes may apply to a table group or an entire bulletin. For a concrete example, Table FFO-7 in the April 1980 bulletin places its million-dollar unit above the table and explains below it that interfund payments are excluded when calculating trust-fund receipts and outlays. Reading near the table boundaries can therefore supply both the numerical scale and an aggregation rule. The question count records explicit scale mentions; it does not identify which questions require a particular adjacent note.

OfficeQA Pro V2 contains 90 questions over a separate corpus of 1,435 parsed documents~\suppcitep{supp-databricks2026officeqa}\footnote{OfficeQA Pro V2:
\url{https://huggingface.co/datasets/databricks/officeqa-pro-v2}}. We apply the released conversion functions to all parsed inputs and freeze the resulting text before evaluation. The conversion does not load questions, answers, gold sources, or model outputs. Pro V2 results are analyzed separately from the primary OfficeQA experiment.

For each question, we collect all annotated source filenames, including those listed directly in the record and those recovered from its official source URLs. This yields 296 unique source filenames across the 246 questions in OfficeQA Full. We use shared source files to group related questions into 100 source components. The questions are then divided into $D_{\rm evo}$ (49 questions), $D_{\rm disc}$ (49 questions), three round-specific confirmation sets $D_{{\rm conf},t}$ (24 questions each), and $D_{\rm cert}$ (76 questions). The source components assigned to $D_{\rm evo}$ and $D_{\rm cert}$ do not appear in any other split. By design, $D_{\rm disc}$ and the confirmation sets may share source components: each confirmation set contains 11 questions from components that also appear in $D_{\rm disc}$ and 13 questions from components that do not appear in $D_{\rm disc}$. Questions never repeat across splits. Each confirmation set contains 11 easy and 13 hard questions. The allocation uses only question identifiers, difficulty labels, and source membership, without using model outcomes. The certification set $D_{\rm cert}$ is accessed only after the final harnesses, $\mathcal A_3$, and the analysis code are fixed.

A purely random question-level split would ignore the fact that multiple OfficeQA questions can rely on the same source files, making the comparison sensitive to accidental source overlap. We therefore use source components to control how related questions are distributed. We test whether a discovered benchmark-wide shortcut reproduces on new questions involving related sources, and whether the same shortcut extends beyond the sources used for discovery.

\subsection{Evaluation and Reporting}
\label{app:analysis}

We compute all OfficeQA correctness indicators using the released \texttt{fuzzy\_match\_answer} function at \(0\%\) numerical tolerance, following the scoring procedure in \suppcitet{supp-alzubi2026evoskill}. The same answer normalization and scoring rule are held fixed across $\Brel$ and every counterfactual benchmark $B_b$. Gold source annotations are not exposed to the foundation model and do not enter the score.

The primary certification evaluates the final harness $H$ from each method in the main comparison---\methodraw{}, \methodmeta{}, \methodhc{}, \methodevolve{} and \methodchase{}---on the 76 certification questions under every benchmark in $\mathcal A_3$. The archive contains $\Brel$ and the counterfactuals confirmed during \methodchase{} optimization. This evaluation contains $5\times76\times|\mathcal A_3|$ method--question--benchmark combinations, with three rollouts per combination. Rollout scores are averaged within question and then across questions. OfficeQA Pro V2 is evaluated separately under its released protocol, with three rollouts per method--question pair; its score is denoted by $\widehat R_{\rm ProV2}$. We also report the archive summaries $\widehat R_{{\rm avg},\mathcal A_3}$ and $\widehat R_{\min,\mathcal A_3}$, using the shorthand defined in the main text.

\subsection{Tracing Counterfactual Harness Search and Evolution}
\label{app:officeqa-additional-results}

We examine $H_1$, $B_{b_1}$, and the harness candidates proposed in rounds 2--3 of the OfficeQA experiment.

\noindent\textbf{The first-round Proposer and a possible shortcut.}
The initial harness $H_0$ is specified in Table~\ref{tab:initial-harness}. Table~\ref{tab:officeqa-harness-changes} reports the changes in $H_1$. Its tool-call budget remains unchanged from $H_0$. $H_1$ only adds model instructions and a conditional search-query rewrite.

\begin{table}[ht]
\centering
\caption{Changes from $H_0$ to $H_1$.}
\label{tab:officeqa-harness-changes}
\small
\setlength{\tabcolsep}{4pt}
\renewcommand{\arraystretch}{1.12}
\begin{tabularx}{\textwidth}{@{}p{0.20\textwidth}Y@{}}
\toprule
Component & Changes from $H_0$ \\
\midrule
Prompt and memory &
Adds reminders for recovery and numerical checking. Once the needed values are found, the model is encouraged to verify or calculate rather than continue searching. \\

Conditional guidance &
Adds instructions for tracking units, converting scales, preserving numerical precision, and returning answers in the required order and format. \\

Search-query rewrite &
When at least four tool calls remain, removes stopwords, keeps up to eight terms of at least three characters, and changes search from requiring all terms to allowing any term. \\

Tool-use guidance &
Tells the model when to retry, to reserve two reads, and to stop once sufficient evidence has been found. \\
\bottomrule
\end{tabularx}
\end{table}
$H_1$ broadens retrieval by allowing any query term to match and encourages the model to verify or calculate once the needed values appear to have been found. Its numerical guidance emphasizes units and precision, but does not explicitly tie each value to its table context, such as headers or nearby notes. We therefore hypothesize that the regular placement of such context in OfficeQA may itself signal that enough evidence has been found. Relocating that context could then change which values the model uses or whether it continues reading.

\noindent\textbf{The first-round Challenger and the counterfactual.}
The Challenger returns a natural-language proposal together with a typed transformation specification. As illustrated in Figure~\ref{fig:officeqa-context-appendix}, the proposal is to collect a table's title, unit line, footnotes, source notes, scope explanations, and surrounding explanatory prose before the table, targeting $H_1$'s possible reliance on their customary locations. 

\begin{figure}[ht]
\centering
\includegraphics[width=\textwidth]{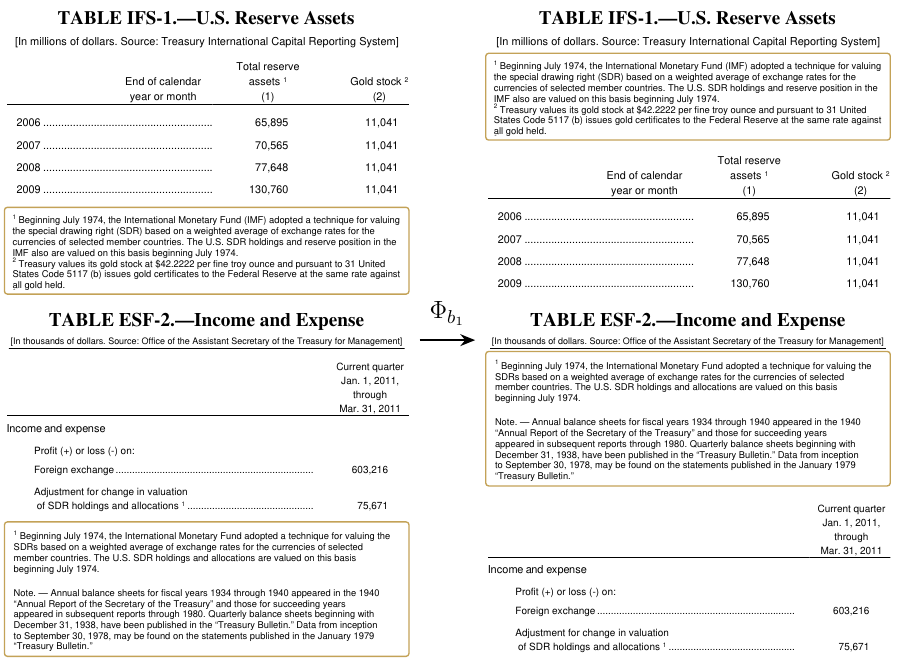}
\caption{Schematic illustration of the Challenger's JSON table-context re-encoding proposal on two OfficeQA excerpts from the September 2011 \emph{Treasury Bulletin}: IFS-1 (top) and ESF-2 (bottom). Highlighted notes are shown above their tables only to visualize the \texttt{placement="before"} field; the evaluated transformation preserves the complete text string and does not reorder content within it.}
\label{fig:officeqa-context-appendix}
\end{figure}

For evaluation, we execute the Challenger-generated transformation $\Phi_{b_1}$ on the model-visible JSON search results. For each result containing a \texttt{text} field, the code moves the complete string into \texttt{table\_context.content}, sets \texttt{table\_context.placement} to \texttt{"before"}, and removes the original \texttt{text} field. The string's internal order, document identifier, and line references are preserved; no table parsing or within-string reordering is performed. This transformation applies to every text-bearing search result, changing its field names and nesting while leaving the underlying documents, result counts, read limits, tool-call budgets, questions, and scoring rule unchanged.

On $D_{\rm disc}$, the gain destruction under $\Phi_{b_1}$ is 7.14\%, compared with 4.08\% for the other candidate transformation, a ten-line passage-boundary shift, which moves each eligible internal passage boundary ten lines later while preserving complete document coverage and the original line order. We therefore select $\Phi_{b_1}$ for evaluation on $D_{{\rm conf},1}$. Across five rollouts for each of its 24 questions under $B_{b_1}$, the estimated gain destruction is 14\%, exceeding the 7.5\% confirmation threshold, and $B_{b_1}$ is added to the archive.

\noindent\textbf{Second-round candidates and fallback to $H_0$.}
In the second round, the Proposer generates three revisions of $H_1$ (Table~\ref{tab:officeqa-round2-designs}). The guidance candidate most directly responds to this concern by instructing the model to track the table context of each value it uses. The structural and integrated candidates instead emphasize combining evidence across periods and documents. All three retain $H_1$'s prompt, memory, retrieval settings, and tool-use guidance.

\begin{table}[ht]
\centering
\caption{Candidate pool of the second-round Proposer.}
\label{tab:officeqa-round2-designs}
\small
\setlength{\tabcolsep}{4pt}
\renewcommand{\arraystretch}{1.12}
\begin{tabularx}{\textwidth}{@{}p{0.16\textwidth}Y@{}}
\toprule
Candidate & Main changes \\
\midrule
Structural & Decompose multi-period queries, merge results in round-robin order, promote document diversity, and combine needed values across documents. \\
Guidance & Bind each value to its title, date, unit, row label, and column header. Read adjacent headers or footnotes before using totals, subtotals, or year-to-date columns. \\
Integrated & Track each needed value by its fiscal year, calendar year, publication date, period, and whether it is cumulative; add rules for numerical and multi-period questions. \\
\bottomrule
\end{tabularx}
\end{table}

For the selection analysis, we apply $\varepsilon=0.05$. Each harness is evaluated on $D_{\rm evo}$ under both $\Brel$ and $B_{b_1}$. We compute $\widehat\Delta_{b_1}=\widehat G_{\rm rel}-\widehat G_{b_1}$ and require $\widehat\Delta_{b_1}\le0.05$. For the second-round and third-round selections, the archive contains $\Brel$ and $B_{b_1}$, so $B_{b_1}$ provides the only nontrivial constraint, as shown in Table~\ref{tab:officeqa-round2-selection}. All three candidates and $H_1$ fail, giving $H_2=H_0$. The structural and guidance candidates come closest, but each loses 6.12\%. Since the second-round selection returned $H_2=H_0$, CHASE skipped Challenger generation, discovery, and confirmation in this round.

\begin{table}[ht]
\centering
\caption{Second-round selection at $\varepsilon=0.05$.}
\label{tab:officeqa-round2-selection}
\small
\setlength{\tabcolsep}{4pt}
\begin{tabular*}{\textwidth}{@{\extracolsep{\fill}}lrrrc@{}}
\toprule
Harness & $\widehat G_{\rm rel}$ & $\widehat G_{b_1}$ & $\widehat\Delta_{b_1}$ & Pass \\
\midrule
$H_1$ & $+8.16\%$ & $-5.10\%$ & $13.27\%$ & No \\
Structural & $+15.31\%$ & $+9.18\%$ & $6.12\%$ & No \\
Guidance & $+9.18\%$ & $+3.06\%$ & $6.12\%$ & No \\
Integrated & $+15.31\%$ & $-2.04\%$ & $17.35\%$ & No \\
\bottomrule
\end{tabular*}
\end{table}

\noindent\textbf{Third-round candidates after failure feedback.}
The third-round Proposer starts from $H_2=H_0$. Its input reports the second-round constraint failures and fallback, and explicitly states that a higher released-benchmark score alone is insufficient.
Table~\ref{tab:officeqa-round3-designs} summarizes the candidates. All three retain $H_0$'s generic prompt and empty memory, rather than inheriting $H_1$'s instruction to stop repeated searching once the needed values appear to have been found.

\begin{table}[ht]
\centering
\caption{Candidate pool of the third-round Proposer.}
\label{tab:officeqa-round3-designs}
\small
\setlength{\tabcolsep}{4pt}
\renewcommand{\arraystretch}{1.12}
\begin{tabularx}{\textwidth}{@{}p{0.14\textwidth}Y@{}}
\toprule
Candidate & Main changes \\
\midrule
Structural & Instructs the model to search separately for different periods, combine search results, and assemble evidence across documents, without changing the underlying retrieval procedure. \\
Guidance & Adds conditional instructions to convert source values to the requested units, preserve full precision during calculation, and follow the requested rounding order and answer format. \\
Integrated & Combines these types of numerical guidance with revised retrieval: retain all query terms except stopwords, first require all retained terms to match, and allow matches on any term only when the initial search has low coverage. \\
\bottomrule
\end{tabularx}
\end{table}
Because all second-round candidates violate the archive constraint, the third round starts from $H_2=H_0$ while retaining feedback from the first-round counterfactual and the second-round constraint failures.
The integrated candidate makes the clearest change to the earlier retrieval strategy: it preserves the query's specificity and uses broad matching as a recovery step rather than at the outset. It combines this change with explicit numerical guidance, without inheriting $H_1$'s instruction to shift away from repeated searching once the needed values appear to have been found. 
The resulting design therefore revisits both how evidence is retrieved and how the model is instructed to proceed after finding values, rather than simply adding another reminder about table context. This sequence illustrates how counterfactual feedback can inform subsequent harness design, not merely reject candidates with high released-benchmark scores.

As shown in Table~\ref{tab:officeqa-round3-selection}, the integrated candidate achieves the highest released-benchmark score among the candidates and is therefore selected as $H_3$. In contrast to the larger gains that violated the archive constraint in round two, the third-round revision yields a smaller released gain that satisfies the constraint.
\begin{table}[ht]
\centering
\caption{Third-round selection at $\varepsilon=0.05$.}
\label{tab:officeqa-round3-selection}
\small
\setlength{\tabcolsep}{4pt}
\begin{tabular*}{\textwidth}{@{\extracolsep{\fill}}lrrrc@{}}
\toprule
Harness  & $\widehat G_{\rm rel}$ & $\widehat G_{b_1}$ & $\widehat\Delta_{b_1}$ & Pass \\
\midrule
Structural  & $-1.02\%$ & $-1.02\%$ & $0.00\%$ & Yes \\
Guidance  & $-2.04\%$ & $-6.12\%$ & $4.08\%$ & Yes \\
Integrated  & $+4.08\%$ & $0.00\%$ & $4.08\%$ & Yes \\
\bottomrule
\end{tabular*}
\end{table}

Figure~\ref{fig:officeqa-candidate-selection} illustrates how the archived counterfactual shapes candidate selection. The second-round candidates achieve larger released gains but violate the archive constraint, whereas the third-round integrated candidate provides a smaller released gain within the same tolerance.
\begin{figure}[ht]
    \centering
    \includegraphics[width=\textwidth]{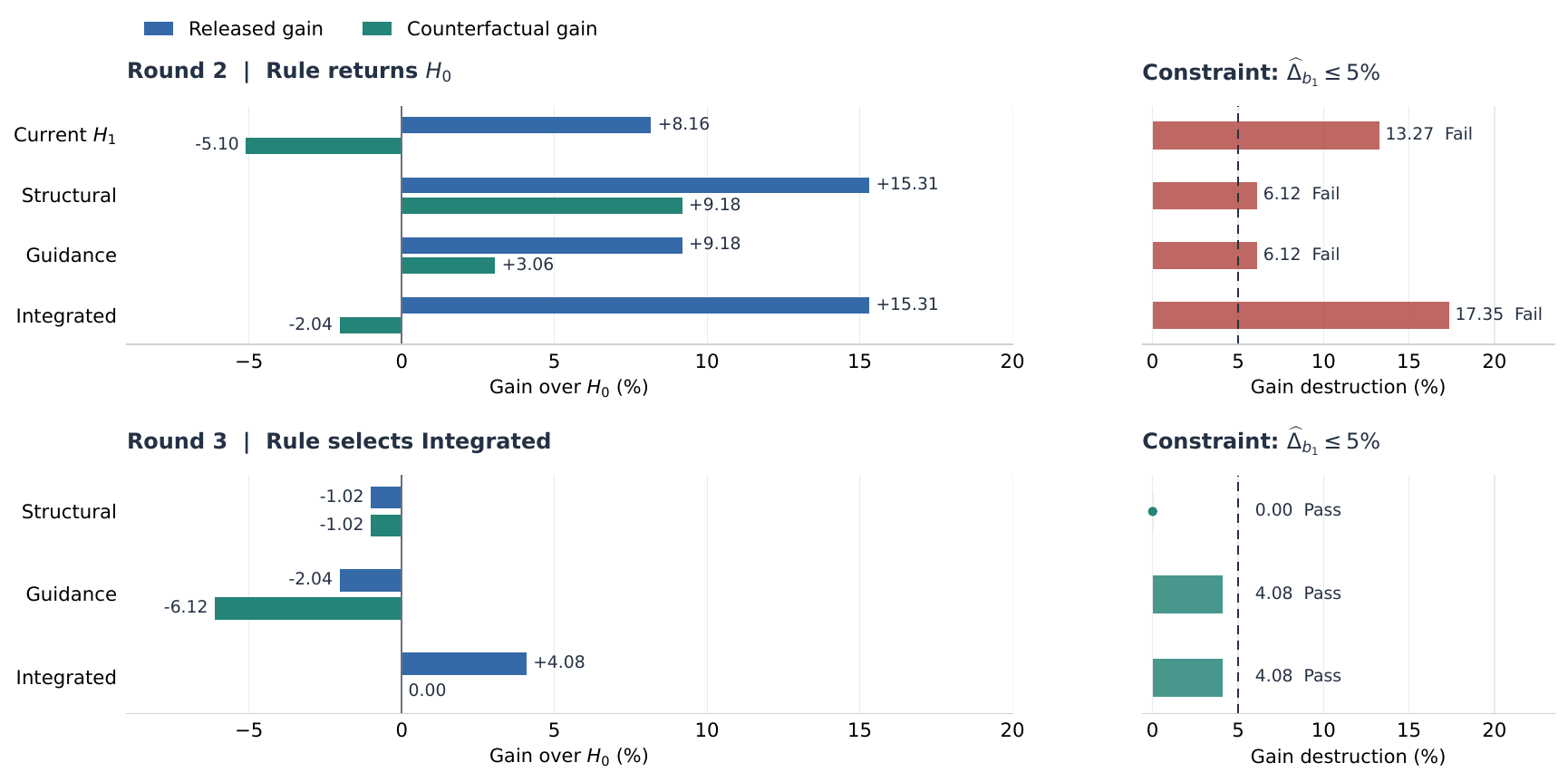}
    \caption{OfficeQA candidate selection in rounds two and three. Left: released and counterfactual gains over $H_0$. Right: gain destruction and the archive constraint $\varepsilon=0.05$.}
    \label{fig:officeqa-candidate-selection}
\end{figure}

After selecting $H_3$, the third-round Challenger evaluates two transformations on $D_{\rm disc}$. A ten-line passage-boundary shift mentioned above produces a gain destruction of 10.20\% and is selected for evaluation on $D_{{\rm conf},3}$. Across five rollouts on each of its 24 questions, the estimated gain destruction is 4.17\%, which is below the 7.5\% confirmation threshold. The transformation is therefore not added to the archive.

\subsection{Token Use}
\label{app:resource-use}

We compare certification token use over 456 evaluation rollouts per method, with three rollouts per evaluated question--benchmark pair. Table~\ref{tab:certification-cost} reports total token use (input plus output tokens) and token use excluding cached input. The latter counts only uncached input and output tokens. Among the five methods, \methodraw{} used the fewest total tokens (93.82 million), whereas \methodmeta{} used the fewest tokens excluding cached input (20.68 million).

\begin{table}[htbp]
\centering
\caption{Token use during certification with three rollouts per evaluated question--benchmark pair.}
\label{tab:certification-cost}
\small
\setlength{\tabcolsep}{9pt}
\begin{tabular}{lrr}
\toprule
& \multicolumn{2}{c}{Token use (millions)} \\
\cmidrule(lr){2-3}
Method
& Total
& Excluding cached input \\
\midrule
\methodraw{}    & 93.82  & 24.30 \\
\methodmeta{}   & 106.71 & 20.68 \\
\methodhc{}     & 118.50 & 24.96 \\
\methodevolve{} & 132.60 & 23.36 \\
\methodchase{}  & 98.15  & 22.39 \\
\bottomrule
\end{tabular}
\end{table}

We report certification separately because \methodraw{} has no
optimization stage, and the four optimized methods run different
numbers and types of optimization episodes. An episode is one
end-to-end execution of one model role on one assigned input; an
episode may contain multiple model requests. Table~\ref{tab:search-cost}
includes every episode in the reported optimization and selection
procedures with recorded token use, whether or not its candidate was
retained. These totals describe the distinct optimization procedures
rather than an episode-matched comparison. \methodmeta{} used 224.31
million total tokens, whereas \methodevolve{} used 574.16 million.
Compared with \methodhc{}, \methodchase{} used 2.43 times as many total
tokens; its optimization procedure also included Challenger discovery
and confirmation.

\begin{table}[htbp]
\centering
\caption{Token use during optimization and selection. All episodes with
recorded token use are included, whether or not their candidates are retained.}
\label{tab:search-cost}
\small
\setlength{\tabcolsep}{7pt}
\begin{tabular}{lrrr}
\toprule
& & \multicolumn{2}{c}{Token use (millions)} \\
\cmidrule(lr){3-4}
Method
& Episodes
& Total
& Excluding cached input \\
\midrule
\methodmeta{}   & 983   & 224.31 & 45.74  \\
\methodhc{}     & 1,390 & 283.02 & 58.59  \\
\methodevolve{} & 2,214 & 574.16 & 106.75 \\
\methodchase{}  & 3,439 & 688.96 & 154.42 \\
\bottomrule
\end{tabular}
\end{table}

\renewcommand{\theequation}{D.\arabic{equation}}
\renewcommand{\thetable}{D.\arabic{table}}
\renewcommand{\thefigure}{D.\arabic{figure}}
\setcounter{equation}{0}
\setcounter{table}{0}
\setcounter{figure}{0}

\section{Additional Results of Synthetic Benchmark Syn-Ledger}
\label{app:synledger}

Syn-Ledger contains 320 multi-document arithmetic tasks with controlled benchmark-wide shortcuts. Section~\ref{app:synledger-construction} describes task construction, the agent interface, and scoring; Section~\ref{app:synledger-protocols} describes benchmark-wide shortcut construction and validation; Section~\ref{app:synledger-split} gives the task allocation.

\subsection{Task Construction, Agent Interface, and Scoring}
\label{app:synledger-construction}

A task consists of a question, twelve ledger documents, and an arithmetic program that determines the answer. Each ledger record specifies an entity, period, accounting category, approval status, version, accounting basis, and amount. The question identifies the requested entity and relevant periods and restricts the calculation to approved records on the enacted basis. Two evidence documents supply the required amounts. The ten distractors comprise two documents for periods not required by the task, two for an incorrect entity, two with superseded records, two with cancelled or draft records, one with an incorrect accounting basis, and one with an irrelevant category. Each document has six records, including contextual rows outside the requested entity or period.

Table~\ref{tab:synledger-tasks} lists the five task families and shows how their required amounts are distributed across the two evidence documents. Each program contributes 64 tasks and uses the same set of 64 distinct two-digit positive answers. The generator first fixes the answer and then samples amounts satisfying the corresponding arithmetic relation. Combined-total tasks use signed adjustments, and integer-share tasks satisfy $100x/y\in\mathbb Z$. Distractor amounts are sampled without using the required amounts. A task is rejected if its answer appears as a complete integer token anywhere in the visible task inputs. The correctness of Syn-Ledger is verified through a separate validation procedure described in Section~\ref{app:synledger-protocols}.

\begin{table}[ht]
\centering
\caption{Syn-Ledger task families. The symbols $x,y,z$ denote the amounts required by each calculation. All answers are integers.}
\label{tab:synledger-tasks}
\small
\setlength{\tabcolsep}{4pt}
\begin{tabularx}{\textwidth}{@{}lcYY@{}}
\toprule
Task family & Calculation & First evidence document & Second evidence document \\
\midrule
Period difference & $x-y$ & Current appropriation $x$ & Previous appropriation $y$ \\
Combined total & $x+y$ & Capital adjustment $x$ & Operations adjustment $y$ \\
Net balance & $x+y-z$ & Appropriation $x$, adjustment $y$ & Obligation $z$ \\
Integer share & $100x/y$ & Allocated amount $x$ & Reference total $y$ \\
Reconciliation gap & $z-(x+y)$ & Reported total $z$ & Component amounts $x,y$ \\
\bottomrule
\end{tabularx}
\end{table}

The model $\Agent$ retrieves evidence through \texttt{list\_files}, \texttt{search}, and \texttt{open\_file}, and can evaluate arithmetic with \texttt{calculator}. Each task allows at most twelve tool calls: four search/list calls in total, six file-opening calls, and two calculator calls. A file-opening call returns three records; another call reads the next three. Search returns four results by default and permits up to twelve.

Scoring uses normalized integer exact match. The scorer normalizes surrounding whitespace, signs, and valid comma grouping, then compares the submitted integer with the ground-truth answer. Explanations, multiple numbers, and decimal outputs receive zero. All benchmark variants $B_b$ use the same tools, budgets, and scorer.

Each document is generated from a designated candidate-record block and additional contextual rows, for a total of six records. In an evidence document, the candidate block contains the one or two records required by the arithmetic program. In a distractor, the corresponding block contains structurally matched decoy records that are excluded by the entity, period, approval-status, version, accounting-basis, or category conditions in the question. The remaining records provide context but do not satisfy the requested conditions.

\subsection{Benchmark-wide Shortcut Construction and Validation}
\label{app:synledger-protocols}

To construct benchmark-wide shortcuts without changing task semantics, we vary five observable features of each document: filename, directory depth, search rank, candidate-record position, and serialization format (Table~\ref{tab:synledger-protocols}). 
For each feature, we define a favorable and an unfavorable level. Under the released benchmark, required evidence is more often assigned favorable levels, creating associations that a harness can exploit without identifying evidence from document contents. 
For example, evidence documents may tend to have shorter filenames, shallower directories, or earlier search ranks. These assignments are stored separately from the ledger contents in a rendering manifest and applied by a deterministic renderer. 
Changing the manifest therefore changes only these observable features while preserving the question, ground-truth answer, and underlying ledger records.

\begin{table}[ht]
\centering
\caption{Observable document features used to construct benchmark-wide shortcuts in Syn-Ledger. Search-rank manipulations change only the ordering of broad-query results and do not change document accessibility.}
\label{tab:synledger-protocols}
\small
\setlength{\tabcolsep}{4pt}
\begin{tabularx}{\textwidth}{@{}lYY@{}}
\toprule
Feature & Favorable level & Unfavorable level \\
\midrule
Filename & Short, regular, lexically early & Longer, lexically late \\
Directory depth & Shallow path & Deeper path \\
Search rank & Earlier in broad-query results & Later in broad-query results \\
Candidate-record position & Designated candidate-record block in the first \texttt{open\_file} window & Designated candidate-record block in the continuation \\
Serialization & Fixed-order Markdown table & Equivalent CSV or key--value records \\
\bottomrule
\end{tabularx}
\end{table}
Under $B_0$, the five observable features are balanced with respect to evidence membership. In each task, one evidence document and five distractors share one feature configuration, while the other evidence document and five distractors take the opposite level of all five features. Across each 32-task block, the feature configuration assigned to one evidence document cycles through all $2^5$ possible combinations. Consequently, every feature configuration occurs twice among the 64 evidence documents and ten times among the 320 distractors. Thus neither any individual feature nor any interaction among the five features is associated with evidence membership under $B_0$.

To construct $\Brel$, we modify the feature levels assigned to the two evidence documents while keeping all distractor assignments fixed. For each observable feature, both evidence documents take the favorable level in 24 of the 32 tasks, while the remaining 8 tasks retain one favorable and one unfavorable evidence document. Hence, $56/64=7/8$ of the evidence documents have the favorable level for each feature. 
The eight tasks with one favorable and one unfavorable evidence document are chosen differently for the five features, preventing their favorable assignments from always occurring on the same tasks.

For each observable feature $j\in\{1,\ldots,5\}$, we define a canonical neutralization $\Phi_j$ that replaces its assignment under $\Brel$ with the corresponding assignment under $B_0$, while leaving the other four features unchanged. Each $\Phi_j$ is deterministic and idempotent, the five canonical neutralizations mutually commute, and applying all five recovers:
\[
    B_0=(\Phi_5\circ\cdots\circ\Phi_1)(\Brel).
\]
We additionally define five placebo transformations that alter irrelevant presentation details for construction audits: line endings, metadata order, trailing whitespace, low-rank distractor order, and section labels. The distractor-ordering placebo leaves the rank of every evidence document unchanged. The five canonical neutralizations and five placebo transformations are used only to audit the benchmark construction; they are not revealed to the Challenger, which proposes transformations dynamically.

\noindent{\bf Validation of Syn-Ledger.} We validate Syn-Ledger independently of the optimization and evaluation rollouts. For every task, the ground-truth answer is recomputed from both the typed document records and the rendered documents under $\Brel$, $B_0$, the five canonical neutralizations, and the five placebo transformations, and the two computations must agree. 
We also verify that both designated evidence documents are necessary: replacing either one while keeping the question, the remaining documents, and all observable-feature assignments fixed must change the resulting ground-truth answer. Additional checks verify the intended feature balance under $\Brel$ and $B_0$, the composition of the five canonical neutralizations into $B_0$, deterministic rendering and replay, and scorer behavior on valid and malformed outputs.

\subsection{Preprocessing and Task Allocation}
\label{app:synledger-split}

The preceding construction gives each task three distinct attributes: its \emph{task family} specifies what is computed, its documents carry \emph{observable features} that create benchmark-wide shortcuts, and its \emph{task set} specifies when the task is used in the experiment. The 32-task blocks are only a construction device for balancing the five observable features.

The 320 tasks are organized into ten complete 32-task blocks, indexed 0--9. We partition them into $D_{\rm evo}$, $D_{\rm disc}$, three round-specific confirmation sets $D_{{\rm conf},t}$, and $D_{\rm cert}$, with sizes $32+32+3\times16+208=320$ (Table~\ref{tab:synledger-split}).
\begin{table}[ht]
\centering
\caption{Syn-Ledger data allocation. Each task appears in exactly one set.}
\label{tab:synledger-split}
\small
\setlength{\tabcolsep}{5pt}
\begin{tabularx}{\textwidth}{@{}lcY@{}}
\toprule
Task set & Size & Construction \\
\midrule
$D_{\rm evo}$ & 32 & Block 0 \\
$D_{\rm disc}$ & 32 & Block 1 \\
$D_{{\rm conf},t}$ & 16 per round & 16 tasks selected from blocks 2--7 \\
$D_{\rm cert}$ & 208 & The remaining 24 tasks from each of blocks 2--7, plus blocks 8 and 9 \\
\bottomrule
\end{tabularx}
\end{table}
Without using model scores or trajectories, a deterministic allocator selects one eight-task subset from each of blocks 2--7. For $t=1,2,3$, the subsets selected from blocks $2t$ and $2t+1$ form $D_{{\rm conf},t}$; the 24 unselected tasks in each of these six blocks enter $D_{\rm cert}$. Blocks 0 and 1 are assigned intact to $D_{\rm evo}$ and $D_{\rm disc}$, respectively, while blocks 8 and 9 enter $D_{\rm cert}$ intact.

Each selected subset contains one or two tasks from every task family. For each observable feature, exactly 14 of the subset's $2\times8=16$ evidence documents take the favorable level under $\Brel$. Each confirmation set therefore has $28/32=7/8$ favorable evidence documents for every feature. Because a complete block has 56 favorable evidence documents out of 64 for every feature, the 24 tasks left after selecting a valid subset have $56-14=42$ favorable evidence documents out of 48. It follows that $D_{\rm cert}$ has:
$\frac{6\times42+2\times56}{6\times48+2\times64}=\frac{364}{416}=\frac{7}{8}$
favorable evidence documents for every feature. Figure~\ref{fig:synledger-design} summarizes the task construction and deterministic allocation.
\begin{figure}[ht]
    \centering
    \includegraphics[width=\textwidth]{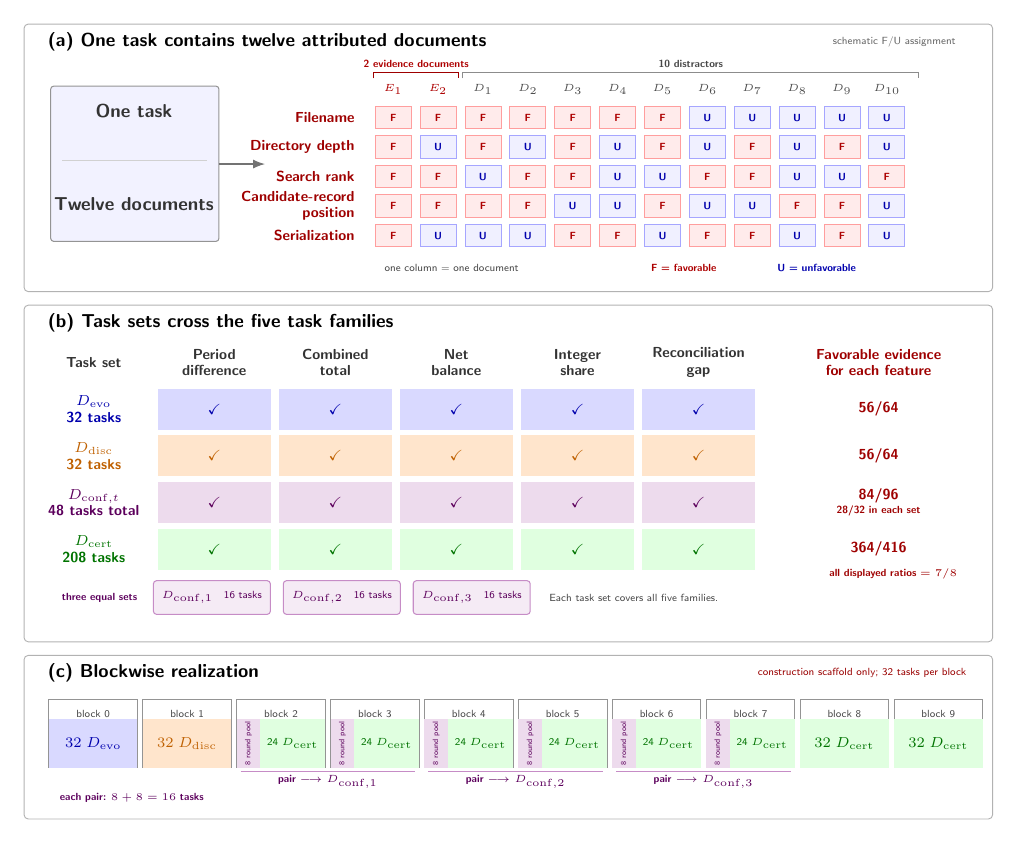}
    \caption{Syn-Ledger task construction and deterministic allocation.}
    \label{fig:synledger-design}
\end{figure}

All task sets are required to cover the five task families. Exact equality of task-family counts is neither possible nor required because 32, 16, and 208 are not all divisible by five. Among allocations satisfying the preceding allocation constraints, we minimize the range of task-family counts within each task set and then balance the joint favorable/unfavorable patterns of each pair of observable features as evenly as possible; ties are resolved deterministically using a seeded hash of task identifiers. Before model evaluation, we verify set sizes, disjointness, task-family counts, the favorable rate for each feature, pairwise feature balance, and deterministic reconstruction.

\renewcommand{\refname}{Supplementary References}
\renewcommand{\bibfont}{\footnotesize}
\setlength{\bibsep}{1pt plus 0.2ex}
\putbib[supplement-references]
\end{bibunit}

\end{document}